\let\min\undefined
\DeclareMathOperator*{\min}{\vphantom{p}min} 
\let\max\undefined
\DeclareMathOperator*{\max}{\vphantom{p}max} 
\let\inf\undefined
\DeclareMathOperator*{\inf}{\vphantom{p}inf} 
\DeclareMathOperator*{\ex}{\mathbb{E}}
\title{A/B/n Testing with Control in the Presence of Subpopulations}
\author{
  Yoan Russac
  \\
CNRS, Inria, ENS \\
Université PSL\\
  \texttt{yoan.russac@ens.fr} \\
   \And
   Christina Katsimerou \\
  Booking.com \\
   \texttt{christina.katsimerou@booking.com} \\
   \And
  Dennis Bohle\\
  Booking.com \\
  \texttt{dennis.bohle@booking.com} \\
   \And
   Olivier Cappé \\
  CNRS, Inria, ENS \\
  Université PSL\\
  \texttt{olivier.cappe@cnrs.fr} \\
   \And
   Aurélien Garivier \\
  UMPA, CNRS \\
  Inria, ENS Lyon \\
  \texttt{aurelien.garivier@ens-lyon.fr} \\
  \And
  Wouter M. Koolen \\
  Centrum Wiskunde \& Informatica
  \\
  \texttt{wmkoolen@cwi.nl}
}
\begin{document}
\normalsize

\maketitle

\begin{abstract}
  Motivated by A/B/n testing applications, we consider a finite set of
  distributions (called \emph{arms}), one of which is treated as a
  \emph{control}. We assume that the population is stratified into
  homogeneous subpopulations. At every time step, a subpopulation is
  sampled and an arm is chosen: the resulting observation is an
  independent draw from the arm conditioned on the subpopulation. The
  quality of each arm is assessed through a weighted combination of
  its subpopulation means. We propose a strategy for sequentially
  choosing one arm per time step so as to discover as fast as possible
  which arms, if any, have higher weighted expectation than the
  control. This strategy is shown to be asymptotically optimal in the
  following sense: if $\tau_\delta$ is the first time when the
  strategy ensures that it is able to output the correct answer with
  probability at least $1-\delta$, then $\mathbb{E}[\tau_\delta]$
  grows linearly with $\log(1/\delta)$ at the exact optimal rate. This
  rate is identified in the paper in three different settings: (1)
  when the experimenter does not observe the subpopulation
  information, (2) when the subpopulation of each sample is observed
  but not chosen, and (3) when the experimenter can select the
  subpopulation from which each response is sampled. We illustrate the
  efficiency of the proposed strategy with numerical simulations on
  synthetic and real data collected from an A/B/n experiment.
\end{abstract}

\section{Introduction}
A/B/n testing is a website optimization procedure where multiple
versions of the content (called "arms" below) are compared, often in order
to find the one with the highest conversion rate. However, many
e-commerce companies use A/B/n testing not only to deploy the best
product implementation, but primarily to draw post-experiment
inferences \cite{johari2015always}. The decision-making involves,
besides experiment results, factors such as the cost of scaling-up a
solution, external data, or whether the implementation fits in a
broader theme. In this setting, each of the arms better than the
default product (which we will refer to as the "control" arm) is a
contender for being deployed and the interest is not only in the best arm.

Given the control and $K \geq 1$ alternative implementations
(variants), the simplest idea is to distribute the traffic uniformly
among the arms; the arms that appear to be significantly better than
the control at the end of the experiment are considered for
deployment.
While well-established, this process can be inefficient in terms of
resources. Some alternatives are soon obviously worse (or better) than
the control and would require fewer samples than the alternatives
closer to the control. A second related shortcoming of the basic A/B/n
testing approach is that setting the duration of the experiment --when
done in advance-- necessitates a very conservative approach by
choosing a run-length that is sufficiently long to differentiate even the
smallest possible changes.

To address these limitations, we consider in this work sequential
testing policies that can both adjust the allocation of the samples
and be stopped adaptively, in light of the data gathered during the
experiment. In the terminology of multi-armed bandits, this
corresponds to \emph{pure exploration} problems (see, e.g., Chap. 33
of \cite{latsze20bandits}).
A pure exploration strategy will typically choose every minute (say),
an allocation of traffic that favors arms for
which the uncertainty is the highest. The experiment is stopped as
soon as the significance is considered sufficient for every arm.
Approaches have been developed
in~\cite{even2006action,kalyanakrishnan2012pac,garivier2016optimal} for the identification
of the single arm with
the highest mean, a task called the Best Arm Identification (BAI)
problem. In particular, \cite{garivier2016optimal} propose a strategy
that is asymptotically optimal in the \emph{fixed confidence setting}, meaning that,
given a risk parameter $\delta$, it finds the best arm with probability at least $1-\delta$, using
an expected number of samples that is hardly improvable when
$\delta$ is small. Later, \cite{yang2017framework} incorporated the
special role of the control arm in BAI and proposed an algorithm that
declares as winning arm the one with the highest mean only if it is
significantly better than the
control. 
In this paper, we propose a solution to the problem of identifying all
the arms that are better than the control, in a framework that generalizes the
fixed confidence
setting.
In order to provide useful tools for practical A/B/n testing, we
address two additional issues.

First, traditional stochastic bandit models are based on the
assumption that the arm samples are i.i.d., whereas real world data
streams usually show trends or some form of inhomogeneity. A particular
case of interest for website optimization are the seasonal patterns
caused by time-of-day or day-of-week variations. We henceforth include
in our model observed
covariates (e.g.\ the time of the day, but possibly also the country of
origin, or controlled covariates  like the
order in which partners appear on the page, etc.) that stratify the observations into homogeneous
subpopulations.
We study
different scenarios, depending on how much interaction is possible
with these subpopulations. We provide a sample complexity analysis and
an efficient algorithm in each case. In particular, we will show that
using the subpopulation information efficiently can provide significant
speedups of the decision-making. In the following,
we will refer to the task of identifying the set
of Arms that are Better than the Control in the presence of Subpopulations
as the ABC-S problem.

Second, the practice of A/B/n testing often differs from a pure
sequential experiment in that the experimenter cannot always
fix a risk $\delta$ at the beginning and passively wait for the
stopping time of the experiment without any time limitation.  To
address this issue, \cite{johari2015always} proposed to define some
notion of sequential "p-values" that can be monitored as the
experiment progresses and used to terminate it.
This notion was further used in the BAI setting in \cite{yang2017framework}.
In this contribution, we elaborate on this idea by sequentially updating a suggested
solution to the ABC-S problem \emph{together with} a
risk assessment for this suggestion. We show that, for any stopping time, the
probability that the suggested solution is incorrect is indeed lower than
the risk assessment. When the stopping time is selected as in usual fixed-confidence pure
exploration, we recover the exact same guarantees but this view of the problem
also provides useful results, for instance, if the
experiment needs to be terminated prematurely.

\paragraph{Related work.}
Pure exploration strategies have been studied in various settings:
the identification of the best arm \cite{even2006action, garivier2016optimal}, the
identification of the top $m$ arms \cite{chen2017nearly,kalyanakrishnan2012pac,gabillon2012best}
identifying the arms that are better than a threshold \cite{locatelli2016optimal, cheshire2020influence},
or identifying all $\epsilon$-good arms \cite{mason2020finding}.
As far as we know, this paper is the first to consider the problem of identifying all
the arms better than a control.
It is also the first to consider subpopulations in pure exploration tasks.
While motivated by the example of online companies, we believe that
the proposed algorithms are relevant to other domains where randomized
controlled trials are used for learning. An example could be clinical
trials: one may wish to identify all the alternative treatments that
work better that some reference medical treatment. This would permit
to choose among them taking into account different characteristics
(some could be cheaper, using another molecule for avoiding allergy,
etc.).

Close to the notion of the control is the notion of threshold.
\citet{locatelli2016optimal} propose an algorithm for identifying all arms above a
given threshold. Their algorithm
samples according to the significance of a statistical test, and
shares some similarities with the present article in the Gaussian
case; however, the perspective is rather different: the authors
consider the \emph{fixed-budget setting}: the total number of samples
is fixed, and the goal is then to minimize the probability of
returning a wrong answer at the end.
Here, the index of the control arm is known but its probability distribution is not.

In our work, the quality of
the different arms is assessed with a weighted combination of its subpopulations means.
Minimizing the estimation error of a convex combination of means through adaptive sampling
was considered in \cite{carpentier2011finite} with the introduction of a \textit{stratified estimator}
that will naturally appear in our analysis.

The paper is organized as follows. In Section~\ref{sec:ABC-S}, we present
the mathematical model and study the
information-theoretic complexity of the problem, extending the lower
bound of~\cite{garivier2016optimal} to the ABC-S setting. We show how the
complexity of the problem depends on the degree of interaction that one has
with the subpopulations, introducing different modes of interaction to be
defined in Figure~\ref{fig:modes} below. We also consider in detail the
Gaussian case which gives rise to more interpretable results.
Section~\ref{sec:algs} describes how to implement the proposed
strategy, which involves the numerical resolution of non-trivial
optimization problems. Finally, we provide the results of numerical
experiments on synthetic and real data sets in Section~\ref{sec:experiments}.

\section{The complexity of the ABC-S problem}
\label{sec:ABC-S}

\subsection{Mathematical framework}

A problem instance consists of the following ingredients. Known to the
learner are the number of arms $K \ge 1$ in addition to the designated
control arm $0$, the number of subpopulations $J$ (a standard bandit
being $J=1$), and the vector $\vbeta \in \mathbb R^J$
representing the relative importance
of the subpopulations for the learning objective.
We further make the stochastic
assumption that samples from each arm $a$ (including the control) and subpopulation $i$ are
drawn i.i.d.\ from an unknown probability distribution $\nu_{a,i}$ on
$\mathbb R$, whose mean we will denote by $\mu_{a,i}$. The quality of
arm $a$ is $\mu_a \df \sum_{i=1}^J \beta_i \mu_{a,i}$ the combination of the
means of the arms in the different populations. For $\vbeta \in \mathbb{R}^J$
we define the ABC-S problem as the correct identification
of the set
\[
\mathcal{S}_{\vbeta}(\vmu) \df \setc[\Big]{a \in [K]}{ \sum_{i=1}^J \beta_i \mu_{a,i} > \sum_{i=1}^J \beta_i \mu_{0,i}}\;.
\]

At every time step $t$, the algorithm selects an arm $A_t$ based on
previous choices and outcomes and observes or selects (except when explicitly
specified) the population type $I_t$.  Upon the selection of the arm
$A_t$ a reward $X_t$ is obtained.  This defines a sigma-field
generated by the observations up to time $t$ denoted
$\mathcal{F}_t = \sigma(I_1, X_1, \dotsc, I_t, X_t)$.  The number of
times arm $a$ was selected for subpopulation $i$ at time $t$ is denoted
$N_{a,i}(t) \df \sum_{s=1}^t \mathds{1}(A_s = a, I_s = i)$ and the
number of draws of arm $a$,
$N_{a}(t) \df \sum_{s=1}^t \mathds{1}(A_s = a)$. We define the gap with the control
arm and arm $a$, $\Delta_a \df \mu_0 - \mu_a$.

\paragraph{Modes of interaction} We consider four modes of interaction
of the learner with the bandit, as specified in
Figure~\ref{fig:modes} below.
In any of the three passive modes of interaction (described in
\Cref{fig:proportional,fig:agnostic,fig:oblivious}), we assume that the
subpopulation $i$ represents a known
proportion $\alpha_i$ of the total
population, and hence that the sequence of subpopulations is drawn i.i.d.\ from the fixed and discrete
distribution $I_t \sim \valpha = (\alpha_1, \dotsc, \alpha_J)$ with
$\valpha \in \Sigma_J \df \setc{x \in [0,1]^J}{\sum_{i} x_i = 1}$ the
$J$-dimensional simplex. Here $\valpha$ is an exogenous parameter
and can differ from $\vbeta$ which is inherent to
the learning objective and is also assumed to be known.
Although it is most natural in many applications to consider that
$\vbeta = \valpha$ (it is even necessary in the oblivious mode to make
the estimation of the $\mu_a$'s feasible), Example~\ref{ex:betaneg}
below describes a concrete scenario in which $\vbeta$ has negative
components.

\begin{figure}[h]
  \setlist[enumerate]{leftmargin=*}
  \begin{subfigure}[b]{.24\textwidth}
    \begin{enumerate}
    \item Pick $A_t$ and $I_t$
    \item See $X_t \sim \nu_{A_t, I_t}$
    \end{enumerate}
    \caption{\emph{Active} mode}\label{mode:active}
  \end{subfigure}
  \begin{subfigure}[b]{.24\textwidth}
    \begin{enumerate}
    \item See $I_t \sim \valpha$
    \item Pick $A_t$
    \item See $X_t \sim \nu_{A_t, I_t}$
    \end{enumerate}
    \caption{\emph{Proportional} mode}\label{mode:proportional}
    \label{fig:proportional}
  \end{subfigure}
  \begin{subfigure}[b]{.24\textwidth}
    \begin{enumerate}
    \item Pick $A_t$
    \item See $I_t \sim \valpha$
    \item See $X_t \sim \nu_{A_t, I_t}$
    \end{enumerate}
    \caption{\emph{Agnostic} mode}\label{mode:agnostic}
    \label{fig:agnostic}
  \end{subfigure}
  \begin{subfigure}[b]{.24\textwidth}
    \begin{enumerate}
    \item Pick $A_t$
    \item Do \emph{not} see $I_t \sim \valpha$
    \item See $X_t \sim \nu_{A_t, I_t}$
    \end{enumerate}
    \caption{\emph{Oblivious} mode.}
    \label{fig:oblivious}
  \end{subfigure}
  \caption{Modes of Interaction between Learner and Bandit in each
    round. In Active mode the learner determines the subpopulation,
    while in the right three passive modes it is sampled from
    $\valpha$.}
      \label{fig:modes}
\end{figure}

The distributions $(\nu_{a,i})_{a,i}$ are assumed to belong
the same one-parameter exponential family,
$
\mathcal{P} \df \{(\nu_\theta)_\theta: d \nu_\theta/d \xi = \exp( \theta x - b(\theta)) \}
$,
with $\xi$ a reference measure on $\mathbb{R}$ and
$b: \Theta \subset \mathbb{R} \mapsto \mathbb{R}$. Every probability
distribution $\nu_\theta$ in $\mathcal{P}$ is entirely defined by its
mean $\dot{b}(\theta)$ \cite{cappe2013kullback}. We may hence identify any bandit instance with its matrix of means $\vmu \in \mathbb R^{(K+1)\times J}$
In addition, the
Kullback-Leibler divergence between two distributions $\nu_\theta$ and
$\nu_{\theta'} \in \mathcal{P}$ may be written in the following Bregman
form:
\[
d(\mu, \mu') = \textnormal{KL}(\nu_\theta, \nu_{\theta'}) = b(\theta') - b(\theta) - \dot{b}(\theta)
(\theta'-\theta) \;,
\]
where $\mu = \dot{b}(\theta)$ and $\mu' = \dot{b}(\theta')$ correspond to the means of the two
distributions $\nu_\theta$ and $\nu_{\theta'}$.
We also use the notation $\textnormal{kl}(p,q)$ to denote the KL
divergence of two Bernoulli distributions of parameter $p$ and $q$.

We define
$\mathcal{L} \df \{\vmu : \forall a \in [K] \cup \{0\}, \forall i \in [J], \nu_{a,i} \in
\mathcal{P} \, \textnormal{and} \, \mu_{0} \neq \mu_{a} \}$
the set of identifiable instances where no arm has the
same weighted mean as the control. 
At every time step, the policies we consider
output a risk assessment $\hat{\delta}_t$ together with a recommendation $\hat{\mathcal{S}}_{t}$. 
We focus on \textit{safely
calibrated} policies, that are defined as satisfying the following
property
\begin{equation}
\label{eq:p-value}
\forall \vmu \in \mathcal{L},\  \forall \delta \in (0,1),  \quad
\mathbb{P}_\vmu \left(
  \exists t \geq 1 :
  \hat{\mathcal{S}}_{t} \neq  \mathcal{S}_{\vbeta}(\vmu) \;
  \cap \; \hat{\delta}_t \leq \delta \right) \leq \delta \;.
\end{equation}
Finally, when fixing a level of risk $\delta$, we consider the
stopping time associated to the filtration $\mathcal{F}_t$,
$\tau_\delta = \inf \{ t \geq 0, \, \hat{\delta}_t \leq \delta \}$.
The objective is then to minimize the expected number of rounds
necessary to obtain a level of risk of at most $\delta$. Contrary to
usual $\delta$-PAC algorithms if stopped before $\tau_\delta$, the
strategy still provides guarantees on the output set following
Equation~\ref{eq:p-value}. In particular, safely calibrated policies have a
sampling rule that does not depend on any pre-specified $\delta$, and
as such they are $\delta$-PAC for any $\delta$.

\begin{example}[{\cite[Largest Profit Identification problem][p24]{mixmart}}]
  \label{ex:betaneg}
  Consider a company choosing among $K$ product designs the model to mass produce. Each candidate design $k$ has an (equilibrium) sales price $\mu_{k,1}$ and production cost $\mu_{k,2}$. The goal is to find the model $k$ with the largest profit $\mu_{k,1} - \mu_{k,2}$. Prices and costs are currently unknown, but can be adaptively sampled. Sampling the "price" subpopulation $i=1$ is typically implemented by performing user preference studies, taking questionnaires, etc. Samples from the "cost" subpopulation $i=2$ involve rating manufacturing facilities, forecasting material and labor costs etc. This problem is interesting both in the BAI and ABC objectives. The importance vector is here $\vbeta = (1,-1)$ and 
  $\valpha$ has to be set by the learner.
\end{example}

\subsection{General form of the sample complexity}
Depending on the mode of interaction from Figure~\ref{fig:modes},
the learner has a set of sampling constraints to satisfy, here denoted
$\mathcal{C}$ and precisely defined in the next section.
We define $\Alt(\vmu)$, the different
problem instances where the set of arms better than the control
differs from that of the instance $\vmu$.
Formally,
$\Alt_{\vbeta}(\vmu) \df \setc{ \vlambda \in \mathcal{L}}{
  \mathcal{S}_{\vbeta}(\vlambda) \neq \mathcal{S}_{\vbeta}(\vmu)} $.
This allows us to bound the sample complexity.
\begin{restatable}{theorem}{samplecompl}
\label{th:generic_sample_comp}
Let $\delta \in (0,1)$ and $\vbeta \in \mathbb{R}^J$.  For any strategy
satisfying Equation~\ref{eq:p-value} and any
$\vmu \in\mathcal{L}$, the expected number of rounds for the ABC-S
problem for the agnostic, proportional and active mode satisfies:
\begin{equation}
\mathbb{E}_\vmu[\tau_{\delta}] \geq T^\star(\vmu) \,\textnormal{kl}(\delta, 1- \delta)
\quad
\textnormal{and}
\quad
\liminf_{\delta \to 0} \frac{\mathbb{E}_\vmu[\tau_\delta]}{\ln(1/\delta)}
\geq T^\star(\vmu) \;.
\label{eq:th_samplecompl}
\end{equation}
where (recalling that $\lambda_a = \sum_{i=1}^J \beta_i \lambda_{a,i}$)
\begin{align}
\label{eq:def_T_star_agnostic}
T^\star(\vmu)^{-1} &= \sup_{\w \in \mathcal{C}}
\inf_{\vlambda \in \textnormal{\Alt}_{\vbeta}(\vmu)}
\sum_{a=0}^K \sum_{i=1}^J w_{a,i} d(\mu_{a,i}, \lambda_{a,i})
  \\
\label{eq:seerank}
&=
\sup_{\w \in \mathcal{C}}
  \min_{b \neq 0}
  \inf_{
    \substack{\vlambda \in \mathcal L:
    \lambda_0 = \lambda_b}
  }
\sum_{a \in \{0,b\}} \sum_{i=1}^J w_{a,i} d(\mu_{a,i}, \lambda_{a,i})
 \;.
\end{align}
\end{restatable}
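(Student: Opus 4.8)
The plan is to establish the two displayed claims of the statement separately: first the information-theoretic lower bound \eqref{eq:th_samplecompl}, and then the reduction of the inner infimum from \eqref{eq:def_T_star_agnostic} to the pairwise form \eqref{eq:seerank}.

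For the lower bound, I would run the change-of-measure argument of \cite{garivier2016optimal}, adapted to subpopulations and to the per-mode constraint set $\mathcal{C}$. Fix an alternative $\vlambda \in \Alt_\vbeta(\vmu)$. Since the rewards are conditionally i.i.d.\ given the arm--subpopulation pair and the law $\valpha$ is a known exogenous parameter (hence identical under $\vmu$ and $\vlambda$), the subpopulation factor cancels in the log-likelihood ratio $L_{\tau_\delta}$, and Wald's identity gives $\mathbb{E}_\vmu[L_{\tau_\delta}] = \sum_{a,i} \mathbb{E}_\vmu[N_{a,i}(\tau_\delta)]\, d(\mu_{a,i},\lambda_{a,i})$. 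Applying the data-processing inequality to the event $\{\hat{\mathcal{S}}_{\tau_\delta} = \mathcal{S}_\vbeta(\vmu)\}$ --- which has probability at least $1-\delta$ under $\vmu$ and at most $\delta$ under $\vlambda$, because a safely calibrated policy is $\delta$-PAC and $\mathcal{S}_\vbeta(\vlambda)\neq\mathcal{S}_\vbeta(\vmu)$ --- and using the monotonicity of $\textnormal{kl}$ yields $\sum_{a,i} \mathbb{E}_\vmu[N_{a,i}(\tau_\delta)]\, d(\mu_{a,i},\lambda_{a,i}) \geq \textnormal{kl}(\delta,1-\delta)$. Normalizing, the allocation $w_{a,i} \df \mathbb{E}_\vmu[N_{a,i}(\tau_\delta)]/\mathbb{E}_\vmu[\tau_\delta]$ lies in $\mathcal{C}$ (in the passive modes a second application of Wald's identity gives $\sum_a \mathbb{E}_\vmu[N_{a,i}(\tau_\delta)] = \alpha_i\,\mathbb{E}_\vmu[\tau_\delta]$, which is exactly the constraint defining $\mathcal{C}$). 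Taking the infimum over $\vlambda$ and noting that this feasible $\w$ is dominated by the supremum gives $\mathbb{E}_\vmu[\tau_\delta]\, T^\star(\vmu)^{-1} \geq \textnormal{kl}(\delta,1-\delta)$; the asymptotic bound then follows from $\textnormal{kl}(\delta,1-\delta)\sim \ln(1/\delta)$ as $\delta\to 0$.

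For the reduction to \eqref{eq:seerank}, I would write $\Alt_\vbeta(\vmu) = \bigcup_{b\neq 0}\Alt_b(\vmu)$, where $\Alt_b(\vmu)$ collects the instances on which arm $b$ changes its status relative to the control, i.e.\ where $\lambda_b-\lambda_0$ and $\mu_b-\mu_0$ have opposite signs; indeed $\mathcal{S}_\vbeta(\vlambda)\neq\mathcal{S}_\vbeta(\vmu)$ exactly when some arm flips. An infimum over a finite union is the minimum of the infima, so it remains to evaluate $\inf_{\vlambda\in\Alt_b(\vmu)}\sum_{a,i}w_{a,i}\,d(\mu_{a,i},\lambda_{a,i})$ for each fixed $b$. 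The objective is continuous, convex in $\vlambda$, and minimized at $\vlambda=\vmu$ with value $0$, while $\vmu\notin\Alt_b(\vmu)$; since $\Alt_b(\vmu)$ is the $\mathcal{L}$-part of an open half-space cut out by the single linear condition on $\lambda_b$ versus $\lambda_0$, the infimum is attained on its boundary $\{\lambda_b=\lambda_0\}$, which is why the constraint $\lambda_0=\lambda_b$ appears in \eqref{eq:seerank}. Because $\Alt_b(\vmu)$ constrains only arms $0$ and $b$, every other term $w_{a,i}\,d(\mu_{a,i},\lambda_{a,i})$ with $a\neq 0,b$ is nonnegative and can be driven to zero by taking $\lambda_{a,i}=\mu_{a,i}$ without affecting feasibility; this collapses the double sum to $a\in\{0,b\}$ and yields \eqref{eq:seerank}.

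The main obstacle I anticipate is in this second part: one must verify that although the control arm $0$ is shared across all $K$ comparisons, restricting to $\Alt_b(\vmu)$ genuinely decouples the optimization so the remaining arms can be frozen at $\mu_{a,i}$. The point to check is that membership in $\Alt_b(\vmu)$ imposes no constraint on the other arms, so any incidental change of their status --- or any boundary violation of $\vlambda\in\mathcal{L}$ where $\lambda_0=\lambda_b$ --- is harmless, since it keeps $\vlambda$ inside $\Alt_\vbeta(\vmu)$ and the stated value is recovered as a continuity limit. A secondary technical care is required in the lower-bound step to confirm, via Wald's identity, that the normalized counts respect the mode-dependent set $\mathcal{C}$, which differs across the active, proportional and agnostic modes.
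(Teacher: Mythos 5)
Your proposal is correct and follows essentially the same route as the paper: the paper invokes the transportation lemma of Kaufmann et al.\ (which is exactly your change-of-measure plus data-processing argument), normalizes the expected counts and checks membership in the mode-dependent set $\mathcal{C}$ via the per-mode lemmas, and then reduces the infimum over $\Alt_{\vbeta}(\vmu)$ to the pairwise form by the same decomposition you use --- splitting the alternative according to which arm flips status relative to the control, freezing the remaining arms at $\vmu$, and observing that both one-sided infima are attained on the common boundary $\lambda_0=\lambda_b$.
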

This result is established in
Appendix~\ref{app:lowerbd}. $T^\star$ characterizes the
difficulty of the learning problem.

\subsection{Influence of the mode of interaction}\label{sec:mode.as.constraints}
We consider the four different modes governing the sampling rule as
outlined in Figure~\ref{fig:modes}.
In the \textit{agnostic}
mode (Fig.~\ref{mode:agnostic}) an arm is first selected, after which the subpopulation type is observed.
Mathematically, this brings the
equality
$\mathbb{E}_\vmu[N_{a,i}(T) ] = \alpha_i \mathbb{E}_\vmu[N_a(T)]$
established in Lemma~\ref{lemma:agnostic} and the independence constraint on the
weights
$\w \in \mathcal{C}_{\textnormal{agnostic}} \df \{w_{a,i} = \alpha_i u_a
: (u_0, \dotsc, u_K) \in \Sigma_{K+1}\}$.

In the \textit{proportional} mode (Fig.~\ref{mode:proportional}), $A_t$ is
chosen based on $\mathcal{F}_{t-1}$ and the current
subpopulation
$I_t$. Here, the constraint is that the total number of pulls of the
different arms in the subpopulation $i$ should respect the
frequency of this subpopulation, i.e.\
$\sum_{a} \mathbb{E}_\vmu[N_{a,i}(T) ] = \alpha_i T$. This
induces a marginal constraint on the weights of the form
$ \w \in \mathcal{C}_{\textnormal{prop}} \df \setc{ \w \in
\Sigma_{(K+1)J}}{\forall i \leq J, \; \sum_a w_{a,i} = \alpha_i }$.
This result is established in Lemma~\ref{lemma:proportional} reported in
Appendix~\ref{app:interactions}.

In the \textit{active} mode (Fig.~\ref{mode:active}), the learner has an
additional degree of freedom--- she can ask for any subpopulation type at any round.
In that case,
$\w \in \mathcal{C}_{\textnormal{active}} \df \Sigma_{(K+1)J}$ is unconstrained.

By remarking that
$\mathcal{C}_{\textnormal{agnostic}} \subset
\mathcal{C}_{\textnormal{prop}} \subset
\mathcal{C}_{\textnormal{active}}$, and given the optimization program
\eqref{eq:def_T_star_agnostic} solved to obtain the characteristic
time, one immediately gets
\begin{equation}
\label{eq:ordering_times}
\forall \vmu \in \mathcal{L}, \quad
T_{\textnormal{active}}^\star(\vmu) \leq T_{\textnormal{proportional}}^\star(\vmu)
\leq T_{\textnormal{agnostic}} ^\star(\vmu) \;.
\end{equation}
Hence, as expected, the more control/information on the subpopulation
the learner has, the faster she is able to identify the set of arms
that are better than the control.

To compare with the \textit{oblivious} mode, in which the subpopulation
information is not even observed, we have to assume that $\valpha=\vbeta$.
In that case, the arm rewards follow a mixture distribution: $X_t | A_t \mathop{=} a ~\sim~ \sum_{i=1}^J \alpha_i \nu_{a,i}$.
In Proposition~\ref{prop:sample_comp_oblivious} reported in Appendix~\ref{app:oblivious}, we properly define
the characteristic time of an oblivious safely calibrated policy and prove that the joint convexity of Kullback-Leibler divergences implies
that it is larger than its agnostic counterpart.
This completes the picture of the ordering of the characteristic times by showing that, when $\valpha=\vbeta$,
\begin{equation}
\label{eq:ordering_times_complete}
\forall \vmu \in \mathcal{L}, \quad
T_{\textnormal{active}}^\star(\vmu) \leq T_{\textnormal{proportional}}^\star(\vmu)
\leq T_{\textnormal{agnostic}} ^\star(\vmu)
\leq T_{\textnormal{oblivious}} ^\star(\vmu) \;.
\end{equation}

Note that although we provide, in Section~\ref{sec:algs}, algorithms to
numerically compute the first three complexites, evaluating
$T_{\textnormal{oblivious}}^\star(\vmu)$ would be much harder, as the mixture distributions
can no more be parameterized by their mean only. Our current techniques do not yield a general-purpose practical
algorithm that is asymptotically optimal in the \textit{oblivious} mode for the ABC-S problem.
In the Bernoulli case, however, as mixtures of Bernoulli distributions are Bernoulli
distribution, one can use the single-population Bernoulli approach discussed in the next paragraph. For Gaussian
distributions, one can use a suboptimal approach based on the observation that location mixtures of Gaussians with bounded
means are sub-Gaussian (see Appendix~\ref{app:oblivious} for details).

\subsection{Single population and relationship with best arm identification}
In order to illustrate the nature of the the ABC-S
problem, we make a detour through the single population case, that is, when $J=1$.
Given two weights $w_a, w_b$ and two means $\mu_a$, $\mu_b$, we
introduce the minimum weighted transportation cost for moving the
means to a common position.
\[
  d_\text{mid}(w_a, \mu_a, w_b, \mu_b)
  ~\df~~
  \inf_v w_a d(\mu_a, v) + w_b d(\mu_b, v)
  = w_a d(\mu_a, v_{a,b}^\star) + w_b d(\mu_b, v_{a,b}^\star)
\]
where $v^*_{a,b}$, the optimal common location, is the weighted average,
i.e.
$
v^*_{a,b} = \frac{w_a}{w_a + w_b} \mu_a + \frac{w_b}{w_a + w_b}  \mu_b
$. 

\paragraph{Constructing an instance in the alternative}
When identifying all the arms better than a control,
there are two different ways to obtain a close-by bandit model $\vlambda$
in the alternative. The first option consists in taking an arm
which does not belong to $\mathcal{S}_{\vbeta}(\vmu)$ and to augment
its mean on the alternative
model such that it becomes above the control (or to reduce the mean of the control).
Otherwise, it is possible to take an arm that is better
than the control in the bandit model
$\vmu$ and to shrink its mean such that it becomes
lower than the control on the alternative (or augment the control).
Note that the infimum over the alternative has the same expression in the two cases
(see proof of Proposition~\ref{prop:charsingle} in Appendix~\ref{appx:charsingle}).

There is a priori no link between a BAI problem and an ABC one. In
particular, in the BAI problem there are only $K+1$ possible choices
for the best arm while when looking for $\mathcal{S}_{\vbeta}(\vmu)$ there are up
to $2^K$ different sets to consider. Yet, the next proposition shows that the
characteristic time $T^\star$ of any ABC problem with $J=1$ subpopulation shares strong similarities with
that of BAI problems.
\begin{restatable}{proposition}{charsingle}
\label{prop:charsingle}
Let $\delta \in (0,1)$ and $\vmu \in \mathcal{L}$. For any strategy satisfying
Equation~\ref{eq:p-value}, Equation~\ref{eq:th_samplecompl} holds with
\begin{align*}
\label{eq:def_T_star}
T^\star(\vmu)^{-1} & = \sup_{\w \in \Sigma_{K+1} } \inf_{\vlambda \in \Alt_{\vbeta}(\vmu)}
\sum_{a=0}^K w_a d(\mu_a, \lambda_a)
=
 \sup_{\w \in \Sigma_{K+1} }
\min_{b \neq 0} d_\textnormal{mid}(w_0, \mu_0, w_b, \mu_b) \;.
\end{align*}
\end{restatable}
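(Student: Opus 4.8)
The plan is to specialize the general characteristic time from Theorem~\ref{th:generic_sample_comp} to the single-population case $J=1$ and then simplify the inner optimization. When $J=1$, the importance vector $\vbeta$ is a scalar (absorbed into the single subpopulation), so $\mu_a = \beta_1 \mu_{a,1}$ and the weighted-mean ordering is just the ordering of the $\mu_a$'s. The active/proportional/agnostic constraint sets all collapse to $\mathcal{C} = \Sigma_{K+1}$ in the single-population case, which gives the outer supremum $\sup_{\w \in \Sigma_{K+1}}$ in the statement. So the entire content is to evaluate the inner problem $\inf_{\vlambda \in \Alt_{\vbeta}(\vmu)} \sum_{a=0}^K w_a d(\mu_a, \lambda_a)$ and show it equals $\min_{b \neq 0} d_\text{mid}(w_0, \mu_0, w_b, \mu_b)$.

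First I would unpack the alternative set. An instance $\vlambda \in \Alt_{\vbeta}(\vmu)$ is one where $\mathcal{S}(\vlambda) \neq \mathcal{S}(\vmu)$, i.e.\ at least one arm switches sides relative to the control. The key reduction is that the cheapest such confusion only needs to flip a \emph{single} arm $b \neq 0$ across the control. This is the content already flagged in the ``Constructing an instance in the alternative'' paragraph: whether arm $b$ was above the control (and we must push $\lambda_b \le \lambda_0$) or below it (and we must push $\lambda_b \ge \lambda_0$), the cheapest perturbation moves only the coordinates $\lambda_0$ and $\lambda_b$, leaving every other arm at its true mean (cost zero). Hence
\begin{align*}
\inf_{\vlambda \in \Alt_{\vbeta}(\vmu)} \sum_{a=0}^K w_a d(\mu_a, \lambda_a)
= \min_{b \neq 0} \inf_{\substack{\lambda_0, \lambda_b : \lambda_0 = \lambda_b}} \big( w_0 d(\mu_0, \lambda_0) + w_b d(\mu_b, \lambda_b) \big) .
\end{align*}
I would justify replacing the inequality constraint $\lambda_b \gtrless \lambda_0$ by the equality $\lambda_0 = \lambda_b$ by noting that the unconstrained optimal common location $v^\star_{a,b}$ lies between $\mu_0$ and $\mu_b$, so the boundary of the feasible region is exactly where the infimum is attained (crossing the control is a closed constraint and the divergence is increasing as one moves past the boundary). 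This step is essentially the $\min_{b\neq 0}\inf_{\lambda_0=\lambda_b}$ form already recorded in Equation~\eqref{eq:seerank} of Theorem~\ref{th:generic_sample_comp}, restricted to $J=1$.

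Next I would recognize the inner infimum over the common value $\lambda_0 = \lambda_b = v$ as exactly the definition of $d_\text{mid}$:
\begin{align*}
\inf_v \big( w_0 d(\mu_0, v) + w_b d(\mu_b, v) \big) = d_\text{mid}(w_0, \mu_0, w_b, \mu_b),
\end{align*}
with minimizer $v^\star_{0,b} = \frac{w_0}{w_0 + w_b}\mu_0 + \frac{w_b}{w_0+w_b}\mu_b$. That the weighted average is the minimizer follows from first-order optimality: differentiating in $v$ and using the Bregman form $\partial_v d(\mu, v) = \ddot b(\theta_v)(v - \mu)$ (the gradient of a KL divergence in the mean parameterization is proportional to $v - \mu$), the stationarity condition $w_0(v-\mu_0) + w_b(v-\mu_b) = 0$ gives the stated barycenter, and convexity of $d(\mu,\cdot)$ makes it the global minimum. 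Assembling the two displays yields $T^\star(\vmu)^{-1} = \sup_{\w \in \Sigma_{K+1}} \min_{b \neq 0} d_\text{mid}(w_0, \mu_0, w_b, \mu_b)$, as claimed.

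The main obstacle I anticipate is the single-arm reduction, i.e.\ arguing rigorously that confusing exactly one arm is optimal and that the optimal perturbation touches only $\lambda_0$ and $\lambda_b$. One must check that moving additional arms can only increase the cost (each term $w_a d(\mu_a, \lambda_a) \ge 0$ with equality at $\lambda_a = \mu_a$), and that a cheapest boundary instance can always be realized by the single flip whose $d_\text{mid}$ is smallest. The subtlety the paper highlights — that the two cases (augmenting an arm below the control versus shrinking an arm above it) yield the same expression — is handled automatically here because $d_\text{mid}$ is symmetric in its treatment of the two means and the equality constraint $\lambda_0 = \lambda_b$ does not distinguish which mean was originally larger.
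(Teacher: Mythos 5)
Your proposal is correct and follows essentially the same route as the paper's proof: reduce the alternative to single-arm flips across the control, zero out the cost of all untouched arms, observe that the inequality-constrained two-coordinate infimum is attained on the boundary $\lambda_0=\lambda_b$ at the weighted average (the paper does this via a Lagrangian, you via direct first-order optimality — the same computation), and identify the result with $d_\textnormal{mid}$, noting that both flip directions give the same expression and that the arms above and below the control together exhaust $[K]$.
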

The proof is reported in Appendix~\ref{sec:proof_char_single}.
Note that the expression of the sample complexity is really close to
the one in the BAI setting (\citet[Lemma 3]{garivier2016optimal})
except that we consider all the indices different from the control
here instead of the indices different from the best arm.  

\subsection{The Gaussian case}
\label{sec:gaussian_case}
In this section, we consider the Gaussian case which is of interest as the
characteristic time admits a more explicit expression, making it possible to further
investigate the differences between the various modes of interaction. We will state our results for the heteroscedastic case, in particular to get a closed-form proxy for the Bernoulli case, where each variance is a function of the (unknown) mean.

\paragraph{A/B testing}
When $K=1$ (one arm and the control arm), we are considering a standard A/B test
with subpopulations and one can easily prove the following result (established in Appendix~\ref{app:gaussian}).

\begin{restatable}{proposition}{lemmagaussfirst}
\label{proposition_others_weights}
For any
$\vmu \in \mathcal{L}$ with $K=1$ and $\nu_{a,i} = \mathcal{N}(\mu_{a,j}, \sigma_{a,j}^2)$ one has
\begin{enumerate}
\item $T_{\textnormal{agnostic}}^\star(\vmu) = \frac{2\left( \sqrt{\sum_{i=1}^J \frac{\beta_i^2 \sigma_{0,i}^2 }{\alpha_i}}
+
\sqrt{\sum_{i=1}^J \frac{\beta_i^2 \sigma_{1,i}^2 }{\alpha_i}}
\right)^2}{\Delta_1^2}  \;
 \;
 \textnormal{and}
 \;
w_{a,i}^\star =  \frac{ \alpha_i \sqrt{\sum_{i=1}^J \frac{\beta_i^2 \sigma_{a,i}^2 }{\alpha_i}}}{\sqrt{\sum_{i=1}^J \frac{\beta_i^2 \sigma_{0,i}^2 }{\alpha_i}} + \sqrt{\sum_{i=1}^J \frac{\beta_i^2 \sigma_{1,i}^2 }{\alpha_i}}} $
\item
$T_{\textnormal{prop}}^\star(\vmu) =  \frac{
2 \sum_{i=1}^J \frac{\beta_i^2}{\alpha_i} (\sigma_{0,i} + \sigma_{1,i})^2}
{\Delta_1^2}
\; \textnormal{and} \;
\forall i \leq J, \forall a \in \{0,1\}, \; w_{a,i}^\star = \frac{\alpha_i  \sigma_{a,i}}{\sigma_{0,i} + \sigma_{1,i}}
$
\item
$T_{\textnormal{active}}^\star(\vmu) = \frac{2 \left(
\sum_{i=1}^J |\beta_i|  (\sigma_{0,i} + \sigma_{1,i})
\right)^2}{\Delta_1^2} \;
\textnormal{and}
\;
\forall i \leq J,
\forall a \in \{0,1\}, \;
w_{a,i}^\star = \frac{ |\beta_i| \sigma_{a,i} }{\sum_{i=1}^{J} |\beta_i|
(\sigma_{0,i} + \sigma_{1,i})}
$
\end{enumerate}
\end{restatable}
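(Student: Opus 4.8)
The plan is to specialise the generic rank-one characteristic time to $K=1$, reduce the inner infimum over the alternative to an explicit scalar, and then solve the resulting weight-allocation problem separately in each mode. Since there is a single variant, the only index $b \neq 0$ in \eqref{eq:seerank} is $b=1$, so the outer $\min_{b\neq 0}$ disappears and
\[
T^\star(\vmu)^{-1} = \sup_{\w\in\mathcal{C}}\ \inf_{\substack{\vlambda\in\mathcal L:\ \lambda_0=\lambda_1}}\ \sum_{a\in\{0,1\}}\sum_{i=1}^J w_{a,i}\,d(\mu_{a,i},\lambda_{a,i}).
\]
In the Gaussian case with fixed variances one has $d(\mu_{a,i},\lambda_{a,i}) = (\mu_{a,i}-\lambda_{a,i})^2/(2\sigma_{a,i}^2)$, so the inner problem is the minimisation of a positive-definite quadratic in the $2J$ variables $\lambda_{a,i}$ subject to the single scalar equality $\sum_i\beta_i(\lambda_{0,i}-\lambda_{1,i})=0$. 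Writing $\lambda_{a,i}=\mu_{a,i}+x_{a,i}$, this constraint becomes $\sum_i\beta_i(x_{0,i}-x_{1,i})=-\Delta_1$, and a one-line Lagrange (or hyperplane-projection) computation gives the minimum value $\Delta_1^2\big/\big(2\sum_{a,i}\beta_i^2\sigma_{a,i}^2/w_{a,i}\big)$. Hence in every mode
\[
T^\star(\vmu) = \frac{2}{\Delta_1^2}\ \inf_{\w\in\mathcal{C}}\ \sum_{a\in\{0,1\}}\sum_{i=1}^J \frac{\beta_i^2\sigma_{a,i}^2}{w_{a,i}}\,,
\]
which is the weighted-transportation analogue of $d_{\textnormal{mid}}$ extended to several subpopulations, and the remaining task is to minimise this convex objective under each constraint set.

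I would then treat the three constraint sets in turn, each reducing to the Cauchy--Schwarz inequality $(\sum_j c_j)^2\le(\sum_j c_j^2/w_j)(\sum_j w_j)$ with $c_{a,i}=|\beta_i|\sigma_{a,i}$. For the \emph{active} mode $\mathcal{C}_{\textnormal{active}}=\Sigma_{2J}$ is the full simplex, so a single application gives $\inf = \big(\sum_i|\beta_i|(\sigma_{0,i}+\sigma_{1,i})\big)^2$ with minimiser $w_{a,i}^\star\propto|\beta_i|\sigma_{a,i}$. For the \emph{proportional} mode the marginal constraints $\sum_a w_{a,i}=\alpha_i$ decouple the objective across subpopulations, so Cauchy--Schwarz applied in each block of total mass $\alpha_i$ gives $\sum_i\beta_i^2(\sigma_{0,i}+\sigma_{1,i})^2/\alpha_i$ with $w_{a,i}^\star=\alpha_i\sigma_{a,i}/(\sigma_{0,i}+\sigma_{1,i})$. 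For the \emph{agnostic} mode the product form $w_{a,i}=\alpha_i u_a$ collapses the objective to $S_0/u_0+S_1/u_1$ with $S_a=\sum_i\beta_i^2\sigma_{a,i}^2/\alpha_i$, a two-variable problem whose optimum $u_a^\star\propto\sqrt{S_a}$ yields $(\sqrt{S_0}+\sqrt{S_1})^2$. Substituting each value into the displayed formula for $T^\star$ and reading off the equality cases for the weights produces the three claimed expressions.

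The three Cauchy--Schwarz optimisations and the corresponding equality cases are routine; the only step requiring care is the inner infimum, since the constraint $\lambda_0=\lambda_1$ is a single scalar equality coupling all $2J$ coordinates rather than a componentwise constraint, so one must verify that this constrained quadratic program has the stated closed-form minimiser — this is where I expect the main obstacle to lie. Finiteness and positivity of the optimal weights are automatic: any coordinate with $w_{a,i}=0$ and $\beta_i\sigma_{a,i}\neq 0$ makes the reduced objective $\sum_{a,i}\beta_i^2\sigma_{a,i}^2/w_{a,i}$ infinite, hence cannot be chosen by the minimiser, while coordinates with $\beta_i=0$ receive zero weight as the formulas correctly reflect.
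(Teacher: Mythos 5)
Your proposal is correct, and its first half coincides with the paper's own route: the paper's Lemma~\ref{lemma:alt_simplification} performs exactly the computation you sketch for the inner infimum --- a single scalar Lagrange multiplier $q$ for the coupling constraint $\sum_i \beta_i(\lambda_{0,i}-\lambda_{1,i})=0$, stationary points $\lambda_{0,i} = \mu_{0,i} - q\beta_i\sigma_{0,i}^2/w_{0,i}$ and $\lambda_{1,i} = \mu_{1,i} + q\beta_i\sigma_{1,i}^2/w_{1,i}$, then optimization over $q$ --- yielding the same closed form $\Delta_1^2\big/\big(2\sum_{a,i}\beta_i^2\sigma_{a,i}^2/w_{a,i}\big)$, so the step you flag as the main obstacle is real but is resolved by precisely the quadratic-program argument you propose. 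Where you genuinely diverge is the outer weight optimization. The paper handles the three modes by separate calculus: substitution of $u_1 = 1-u_0$ and solving a quadratic for the agnostic mode, a Lagrangian with $J$ multipliers for the proportional mode, and, for the active mode, elimination of $w_{1,J}$ via the simplex constraint followed by solving the resulting stationarity system --- by far the longest computation in that appendix. Your single Cauchy--Schwarz inequality $(\sum_j c_j)^2 \le (\sum_j c_j^2/w_j)(\sum_j w_j)$ with $c_{a,i}=|\beta_i|\sigma_{a,i}$, applied to the full simplex (active), to per-subpopulation blocks of mass $\alpha_i$ (proportional), and to the collapsed two-variable problem in $(u_0,u_1)$ (agnostic), recovers all three characteristic times in a few lines, and the equality case $w_j \propto c_j$ hands you the optimal weights together with their uniqueness and attainment, including the correct treatment of coordinates with $\beta_i = 0$. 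This unified argument also furnishes a direct certificate of global optimality, whereas the paper's first-order conditions implicitly lean on convexity of $\w \mapsto \sum_{a,i}\beta_i^2\sigma_{a,i}^2/w_{a,i}$ to promote stationarity to a global minimum. In short: same decomposition, same inner-infimum lemma, but a cleaner and more uniform resolution of the mode-specific allocation problems.
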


The optimal allocations in the \textit{agnostic} and
\textit{proportional} cases are constrained by the
proportion of the different subpopulations $\valpha$, whereas, for the \textit{active} mode,
the optimal weights only depend
on $\vbeta$. In general, the optimal weights also depend
on the subpopulation variances, as is well-known in
stratified sampling estimation. Note however, that when (a) the subpopulations all
have a common variance $\sigma^2$ and (b) $\vbeta = \valpha$, then the
optimal allocations and the characteristic times are equal for
the \textit{agnostic}, the \textit{proportional} and the \textit{active} modes.
In that case, $w_{a,i}^\star = \alpha_i/2$, which
also corresponds to the well-known result in Gaussian A/B testing \cite{kaufmann2016complexity}. We have more generally
observed that whenever the subpopulations have approximately the same variances, the \textit{agnostic} and \textit{proportional} modes yield
very similar performances.

\paragraph{Weight computation in the homoscedastic case}
Even in scenarios where all subpopulation variances are equal to
$\sigma^2$, the \textit{active} mode remains very attractive in the cases where
$\vbeta \neq \valpha$. The following proposition shows that in that case, the optimal
weights for the ABC-S problem can be computed efficiently.

\begin{restatable}[Efficient computation in the Gaussian case]{proposition}{optimsimple}
\label{prop:optimal_weights_easy}
With Gaussian distributions with a known variance $\sigma^2$, letting
$ (u_0^\star, \dotsc, u_K^\star)
= \argmax_{u \in \Sigma_{K+1}} \min_{b \neq 0} \frac{\Delta_b^2}{2
 \left(\frac{1}{u_0} + \frac{1}{u_b} \right)  } $, the optimal weights for the active mode satisfy
\[
\forall a \in \{0, \dotsc, K \}, \; \forall i \leq J, \; w_{a,i}^\star = u_a^\star \frac{|\beta_i|}{\sum_{i=1}^J | \beta_i |}  \;.
\]
If, in addition $\valpha = \vbeta$, the above also holds for the agnostic and the proportional modes.
\end{restatable}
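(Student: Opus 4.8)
The plan is to begin from the min--inf characterisation in \eqref{eq:seerank}, evaluate the inner infimum explicitly in the Gaussian homoscedastic case, and then solve the resulting finite-dimensional $\sup_{\w}\min_b$ problem by decoupling the allocation across arms from the allocation across subpopulations. First I would specialise \eqref{eq:seerank} to $d(\mu,\mu')=(\mu-\mu')^2/(2\sigma^2)$. For a fixed competitor $b\neq 0$, the inner infimum minimises $\sum_i w_{0,i}(\mu_{0,i}-\lambda_{0,i})^2+\sum_i w_{b,i}(\mu_{b,i}-\lambda_{b,i})^2$ over $\vlambda$ subject to the single linear constraint $\sum_i\beta_i(\lambda_{0,i}-\lambda_{b,i})=0$. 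This is a weighted Euclidean projection of $\vmu$ onto a hyperplane; solving it with one Lagrange multiplier (and using $\sum_i\beta_i(\mu_{0,i}-\mu_{b,i})=\mu_0-\mu_b=\Delta_b$) yields the closed form
\[
\inf_{\vlambda:\,\lambda_0=\lambda_b}\ \sum_{a\in\{0,b\}}\sum_i w_{a,i}\,d(\mu_{a,i},\lambda_{a,i})\ =\ \frac{\Delta_b^2}{2\sigma^2\sum_i\beta_i^2\left(\frac{1}{w_{0,i}}+\frac{1}{w_{b,i}}\right)}\,.
\]

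Next, for the active mode $\w$ ranges over $\Sigma_{(K+1)J}$, and I would write each weight via its arm-marginal $u_a\df\sum_i w_{a,i}$ together with the within-arm split. The key estimate is Cauchy--Schwarz: for any fixed $u_a$,
\[
\sum_i\frac{\beta_i^2}{w_{a,i}}\ \geq\ \frac{\left(\sum_i|\beta_i|\right)^2}{u_a}\,,
\]
with equality iff $w_{a,i}=u_a|\beta_i|/\sum_j|\beta_j|$. The crucial point---and the step I expect to require the most care to state cleanly---is that this minimising split is identical for every arm and, in particular, the optimal split of the control arm $0$ does not depend on $b$. Hence the subpopulation allocation of the control can be chosen to minimise every denominator of the $\min_b$ at once, with no tension between the different competitors $b$; consequently, for any fixed marginals $u$, the factorised weights $w_{a,i}=u_a|\beta_i|/\sum_j|\beta_j|$ dominate the objective term by term, and therefore maximise $\min_b$.

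Substituting this factorisation into the closed form collapses $\sum_i\beta_i^2(\frac{1}{w_{0,i}}+\frac{1}{w_{b,i}})$ to $(\sum_i|\beta_i|)^2(\frac{1}{u_0}+\frac{1}{u_b})$, so the active problem reduces to $\sup_{u\in\Sigma_{K+1}}\min_b \Delta_b^2\big/\big[2\sigma^2(\sum_i|\beta_i|)^2(\frac{1}{u_0}+\frac{1}{u_b})\big]$. The constant $2\sigma^2(\sum_i|\beta_i|)^2$ is independent of both $b$ and $u$, so it factors out of the inner minimum and the outer $\argmax$, whence the optimal arm-marginals are exactly the $u^\star$ of the statement, proving the active case. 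Finally, when $\valpha=\vbeta$ we have $|\beta_i|=\alpha_i$ and $\sum_j|\beta_j|=1$, so that $w_{a,i}^\star=u_a^\star\alpha_i\in\mathcal{C}_{\textnormal{agnostic}}\subset\mathcal{C}_{\textnormal{prop}}$. Since this active optimum already lies in the smaller feasible sets while the suprema are ordered as in \eqref{eq:ordering_times}, the identical weights remain optimal for the agnostic and proportional modes.
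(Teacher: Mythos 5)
Your proof is correct, and it shares the paper's overall skeleton: the same closed form for the inner infimum (the paper isolates it as Lemma~\ref{lemma:alt_simplification}, proved by the same Lagrange-multiplier computation you sketch), the same decomposition of $\w$ into arm marginals $u_a=\sum_i w_{a,i}$ plus within-arm splits, and the same inclusion argument for the final claim (when $\valpha=\vbeta$ the active optimum lies in $\mathcal{C}_{\textnormal{agnostic}}\subset\mathcal{C}_{\textnormal{prop}}\subset\mathcal{C}_{\textnormal{active}}$, so the ordering of the suprema forces it to be optimal there too). Where you genuinely differ is the key step handling the interaction between the within-arm split and the $\min_b$. The paper applies the max--min inequality to bound $\max_{\mathrm{splits}}\min_b$ by $\min_b\max_{\mathrm{splits}}$, solves the inner maximisation for each fixed $b$ by a Lagrangian (recovering the split $w_{a,i}\propto|\beta_i|$), and then closes the sandwich by plugging the factorised weights back in to exhibit a matching lower bound. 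You instead invoke Cauchy--Schwarz, $\sum_i\beta_i^2/w_{a,i}\ge\bigl(\sum_i|\beta_i|\bigr)^2/u_a$, and observe that its equality case $w_{a,i}=u_a|\beta_i|/\sum_j|\beta_j|$ is the same for every arm and every competitor $b$, so the factorised split achieves equality in \emph{all} terms of the $\min_b$ simultaneously; this term-by-term domination makes the interchange automatically tight and dispenses with the two-sided argument. Your route is more economical and makes explicit why there is no tension between competitors (the control's optimal split is competitor-independent); the paper's Lagrangian route is more mechanical but is essentially the computation one needs anyway in the heteroscedastic setting of Proposition~\ref{proposition_others_weights}. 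Both arrive at the identical reduction to $\sup_{u\in\Sigma_{K+1}}\min_b \Delta_b^2/\bigl(1/u_0+1/u_b\bigr)$ up to a positive constant, and hence at the same weights.
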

The interesting part of Proposition~\ref{prop:optimal_weights_easy} is that
computing $(u_0^\star, \dotsc, u_K^\star)$ can be done efficiently using
Theorem 5 from \cite{garivier2016optimal}. The optimal weights of
the ABC-S problem can be deduced from $u^\star$ without any further
calculation.

\section{Algorithms}\label{sec:algs}
To obtain our algorithms, we instantiate the Track-and-Stop algorithm template to our ABC-S problem. \citet{garivier2016optimal} introduced Track-and-Stop
 and proved its asymptotic optimality in the BAI setting.
Asymptotic optimality
for general partition identification problems was subsequently established
by \citet[Theorem~23]{mixmart} under the assumption of continuity of the oracle
weights $\vmu \mapsto \w^*(\vmu)$. \citet{multiple.answers} show that the
continuity assumption holds for all single-answer problems, in the upper-hemicontinuity
sense, which they show implies asymptotic optimality of the Track-and-Stop (T-a-S) algorithm.
These results directly apply to our ABC-S problem. \citet{purex.games} interpret
T-a-S as a noisy sequential equilibrium computation for the
max-min problem from the lower bound (e.g.\ Equation~\ref{eq:def_T_star_agnostic})
and develop computationally
attractive variants including lazy iterative solution of the $\w^*$ problem, and optimistic
gradients instead of forced exploration.

The details of our implementation are given in Appendix~\ref{sec:algdetails}. In short, we use the
simple standard $\Theta(\sqrt{t})$ forced exploration rounds,
a mode/subpopulation
aware upgrade of the D-tracking scheme \cite{garivier2016optimal} (which is empirically superior to C-tracking)
and we approximately and incrementally compute the oracle weights using the \emph{AdaHedge vs Best Response} iterative saddle point solver from \cite{purex.games}.
We use one single learner, instead of one per possible answer, as
advocated in \cite[Section~4]{purex.games}.
Note that we are not affected by
the non-convergence of D-tracking from \cite[Appendix~E]{purex.games}, as
our problem has a unique $\w^*$ because it is strictly concave in $\w$
(see Appendix~\ref{appx:alg.works}).

\paragraph{The sampling rule}
The high level overview of the algorithm is as follows.
We are given the number of arms $K$ and subpopulations $J$,
the exponential family, the mode of interaction, the subpopulation importance
coefficients $\vbeta$ and, for passive modes, their natural frequencies $\valpha$. The
algorithm then proceeds in rounds $t=1,2,\ldots$ Each round $t$, it calculates
the empirical frequencies $\hat \vmu_t \in \mathbb R^{(K+1) \times J}$
given by $\hat \mu_{a,i}(t) = \frac{1}{N_{a,i}(t)}
\sum_{s=1}^t X_s \mathbf 1\set*{A_s=a, I_s=i}$. It then computes
 (a suitable approximation of) the maximiser (i.e.\ the oracle policy)
  $\w_t = \w^*(\hat \vmu_t) \in \Sigma_{(K+1)\times J}$ of problem
  \eqref{eq:th_samplecompl}. In the active mode, we ``D-track'' $\w_t$, i.e.\ we
  sample $(A_t,I_t) \in \argmax_{a,i} N_{a,i}(t-1) - t \w_t(a,i)$. In the proportional mode, the
  subpopulation $I_t$ is given and we ``D-track'' the conditional distribution of $\w_t$ on
  arms given the subpopulation, i.e.\ $A_t \in \argmax_a N_{a,I_t}(t-1) - t \alpha_{I_t} \w_t(a|I_t)$, where $\w_t(a,i) = \alpha_i \w_t(a|i)$.
  In the agnostic mode we ``D-track'' the marginal distribution of $\w_t$ on
  arms, i.e.\ $A_t \in \argmax_a N_a(t-1) - t \w_t(a)$. For each mode, this sampling
  strategy ensures that $N_{a,i}(t) \approx t \w_t(a,i) \approx t w^*_{a,i}(\vmu)$, thus
  driving down the reported level of confidence as quickly as possible given
  the lower bound from Theorem~\ref{th:generic_sample_comp}.

\paragraph{The recommendation}
Concluding each round, we recommend $\mathcal S_{\vbeta}(\hat \vmu_t)$ at confidence level $\hat \delta(t) = \min \setc*{\delta \in (0,1)}{
  \Lambda(t)
  \ge \beta(t, \delta)
}$ obtained by inverting the threshold $\beta(t, \delta)$ at the GLR statistic
\begin{equation}\label{eq:GLR}
  \Lambda(t)
~=~
  \min_{b \neq 0}
  \inf_{
    \substack{\vlambda \in \mathcal L:
    \lambda_0 = \lambda_b}
  }
  \sum_{a \in \{0,b\}} \sum_{i=1}^J N_{a,i}(t) d(\hat \mu_{a,i}(t), \lambda_{a,i})
  \;.
\end{equation}

\paragraph{The threshold}
For the sharpest theoretically supported
thresholds we refer to \cite{mixmart}.
Namely, an ABC-S problem with $K$-arms and $J$-subpopulations has $2^K$
 answers, and its \emph{rank} \cite[Definition~22]{mixmart}
  is $2 J$, as can be read off from \eqref{eq:seerank}.
  By \cite[Proposition~23]{mixmart} we have validity
   for $\beta(t, \delta) = 6 J \ln \ln t +
   \ln \frac{1}{\delta} + K + 2 J \cdot O(\ln \ln \frac{1}{\delta})$.
   In practice, we follow \cite{garivier2016optimal}
and use instead  the heavily stylized $\ln ((1+\ln t)/\delta)$ that omits several union bounds.

\begin{theorem}\label{thm:asym.opt}
  For every mode, Subpopulation Track-and-Stop is safely calibrated
  (Equation~\ref{eq:p-value}).
  Moreover, Subpopulation Track-and-Stop is asymptotically optimal and
   matches
  the lower bound from Theorem~\ref{th:generic_sample_comp}, in the sense that
\[
\textnormal{for every bandit} \; \vmu \in \mathcal L, \;
\lim_{\delta \to 0} \frac{\mathbb{E}[\tau_\delta]}{\ln(1/\delta)}
= T^\star(\vmu) \;.
\]
\end{theorem}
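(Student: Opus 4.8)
The plan is to establish the two claims separately, reducing each to the general partition-identification analysis of Track-and-Stop from \cite{mixmart,multiple.answers} once our construction is shown to fit that template. For safe calibration, I would first note that whenever $\hat{\mathcal S}_t \neq \mathcal S_{\vbeta}(\vmu)$ there is an arm $b \neq 0$ whose status relative to the control is decided oppositely by $\hat\vmu_t$ and by $\vmu$; hence $\hat\mu_b(t) - \hat\mu_0(t)$ and $\mu_b - \mu_0$ have opposite signs, and some $\vlambda^0$ on the segment from $\hat\vmu_t$ to $\vmu$ satisfies the feasibility constraint $\lambda_0 = \lambda_b$ of \eqref{eq:GLR}. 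Taking this $b$ and $\vlambda^0$ in \eqref{eq:GLR} and using that the Bregman divergence $d(\hat\mu_{a,i}(t),\cdot)$ decreases along that segment gives the pointwise domination $\Lambda(t) \le \sum_{a \in \{0,b\}}\sum_{i=1}^J N_{a,i}(t)\, d(\hat\mu_{a,i}(t),\mu_{a,i})$. Since inverting the confidence level makes $\{\hat\delta_t \le \delta\}$ equivalent to $\{\Lambda(t) \ge \beta(t,\delta)\}$, the left-hand side of \eqref{eq:p-value} is bounded by the probability that this $2J$-term log-likelihood-ratio deviation (union-bounded over the $K$ possible arms $b$) ever crosses $\beta(t,\delta)$. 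Reading off from \eqref{eq:seerank} that the alternative boundary has rank $2J$, this is exactly the quantity controlled by the deviation inequality of \cite[Proposition~23]{mixmart}, which certifies validity for the stated threshold uniformly over $\vmu$ and $\delta$.

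For asymptotic optimality I would run the classical Track-and-Stop argument. The $\Theta(\sqrt t)$ forced-exploration rounds guarantee $N_{a,i}(t) \to \infty$ for every pair $(a,i)$, so $\hat\vmu_t \to \vmu$ almost surely. Because \eqref{eq:def_T_star_agnostic} is strictly concave in $\w$ the oracle weight $\w^*(\vmu)$ is unique (Appendix~\ref{appx:alg.works}), and the upper-hemicontinuity of $\vmu \mapsto \w^*(\vmu)$ from \cite{multiple.answers} then upgrades to genuine continuity, yielding $\w_t = \w^*(\hat\vmu_t) \to \w^*(\vmu)$. The mode-aware D-tracking next drives the empirical allocation to this target: in the active mode the convergence $N_{a,i}(t)/t \to w^*_{a,i}(\vmu)$ is the standard tracking guarantee, while in the passive modes one additionally invokes the law of large numbers on the exogenous subpopulation draws, i.e.\ $\sum_a N_{a,i}(t)/t \to \alpha_i$ in the proportional mode and $N_{a,i}(t)/N_a(t) \to \alpha_i$ in the agnostic mode (Lemma~\ref{lemma:agnostic}), so that tracking the conditional resp.\ marginal arm distribution still produces $N_{a,i}(t)/t \to w^*_{a,i}(\vmu)$.

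Once the allocation converges, continuity of the inner objective of \eqref{eq:GLR} gives $\Lambda(t)/t \to \inf_{\vlambda \in \Alt_{\vbeta}(\vmu)} \sum_{a,i} w^*_{a,i}(\vmu)\, d(\mu_{a,i},\lambda_{a,i}) = 1/T^\star(\vmu)$ almost surely. Since $\{\hat\delta_t \le \delta\}$ is $\{\Lambda(t) \ge \beta(t,\delta)\}$ with $\beta(t,\delta) \sim \ln(1/\delta)$ as $\delta \to 0$, this pathwise linear growth of $\Lambda(t)$ forces $\tau_\delta/\ln(1/\delta) \to T^\star(\vmu)$ almost surely, hence $\limsup_{\delta \to 0}\mathbb{E}[\tau_\delta]/\ln(1/\delta) \le T^\star(\vmu)$ after the convergence is promoted to expectations. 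I expect this last promotion to be the main obstacle: turning the almost-sure statements into a bound on $\mathbb{E}[\tau_\delta]$ requires uniform-in-time concentration of $\hat\vmu_t$ around $\vmu$ and of the tracked counts around $t\,\w^*(\vmu)$, so that the rare deviation events contribute negligibly to the expectation; this is where the forced-exploration rate and the continuity of $\w^*$ must be quantified carefully, and where the mode-specific law-of-large-numbers fluctuations must be absorbed. Combining the resulting upper bound with the matching lower bound of Theorem~\ref{th:generic_sample_comp} yields $\lim_{\delta\to 0}\mathbb{E}[\tau_\delta]/\ln(1/\delta) = T^\star(\vmu)$, as claimed.
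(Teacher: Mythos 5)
Your safe-calibration argument is essentially the paper's. A wrong recommendation at time $t$ means the true $\vmu$ lies in $\Alt_{\vbeta}(\hat\vmu_t)$, so some arm $b$ has flipped status; your intermediate-value construction of a feasible $\vlambda^0$ with $\lambda^0_0=\lambda^0_b$ on the segment between $\hat\vmu_t$ and $\vmu$, together with monotonicity of $d(\hat\mu_{a,i}(t),\cdot)$ along that segment, correctly yields $\Lambda(t) \le \sum_{a\in\{0,b\}}\sum_{i} N_{a,i}(t)\, d(\hat\mu_{a,i}(t),\mu_{a,i})$, after which the anytime-valid deviation inequality of \cite[Proposition~23]{mixmart} at boundary $\beta(t,\delta)$ closes the argument. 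This is exactly what the paper does (stating, as you implicitly use, that calibration is independent of the sampling rule), only you supply the domination step in more detail.

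For the optimality half you take a genuinely different, and incomplete, route. The paper does \emph{not} re-derive the Track-and-Stop analysis: it observes that ABC-S is a single-answer identification problem and invokes \cite[Theorems~7 and~10]{multiple.answers}, which give $\delta$-correctness and asymptotically optimal \emph{expected} sample complexity for T-a-S with C-tracking --- directly for the active mode, for the agnostic mode by treating the pair $(I_t,X_t)$ as the observation --- and it handles the proportional mode by checking that the tracking convergence result \cite[Lemma~6]{multiple.answers} survives when the empirical distribution of $I_t$ converges to $\valpha$ by the law of large numbers (this part matches your conditional-tracking step). You instead sketch the classical Garivier--Kaufmann argument from scratch, and the step you yourself flag as ``the main obstacle'' --- promoting the almost-sure convergence $\tau_\delta/\ln(1/\delta) \to T^\star(\vmu)$ to the bound $\limsup_{\delta\to 0}\mathbb{E}[\tau_\delta]/\ln(1/\delta)\le T^\star(\vmu)$, which requires the uniform concentration and truncation machinery of \cite[Theorem~14]{garivier2016optimal} --- is never carried out. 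That missing step is precisely what the citation buys the paper, so as written your proof of the expectation statement in the theorem has a genuine gap. A smaller mismatch: the paper's theorem is proved for C-tracking, and the strict-concavity/uniqueness lemma of Appendix~\ref{appx:alg.works} serves only to argue that the practical D-tracking variant is not broken by non-uniqueness of $\w^*$; you build the proof on D-tracking convergence directly, which is defensible given uniqueness and continuity of $\w^*(\cdot)$ but would require spelling out the D-tracking lemma, including its adaptation to the passive modes, rather than calling it ``the standard tracking guarantee.''
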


We include the proof in Appendix~\ref{appx:alg.works}.

\section{Experiments}
\label{sec:experiments}
\subsection{Simulations}
\label{sec:simulations}
We conduct numerical experiments to
evaluate the proposed algorithms, focusing on Bernoulli bandit
models, which are ubiquitous in practical applications.

In our experiments, in addition to our T-a-S algorithms with the various interaction
modes,
we include two more sampling rules for comparison:
(1) uniform sampling as a baseline, and (2) the experimentally efficient \textit{Best Challenger} heuristic inspired by \cite{garivier2016optimal},
adapted to the ABC problem and denoted BC-ABC in the sequel.
BC~\cite{garivier2016optimal} for the BAI problem samples in every round the empirical
best arm $\hat{a}_t$
or its best challenger, i.e.\ the arm $\hat{c}_t \neq \hat{a}_t$ at which the GLR statistic (Equation~\ref{eq:GLR}) reaches its minimum.
Our BC-ABC adaptation
samples in every round the control arm or the arm that yields the minimum GLR statistic $\Lambda(t)$,
in the agnostic interaction mode (since $\Lambda(t)$ is subpopulation independent). 
For clearer comparison
between the sampling strategies, all algorithms use the Chernoff stopping criterion~\cite{garivier2016optimal} to determine either when to stop or output the risk assessment at a given time.
We also opted for sampling rules independent from the confidence
parameter $\delta$, because we are aiming for safely calibrated policies.

We first illustrate the fact that the T-a-S algorithm provides a correct --but rather conservative-- assessment
of the risk of its decision whatever the time it is stopped at. To do so, we generated 1000 bandit instances
uniformly at random from [0, 1] with $K=2$ arms.
For each instance, we recorded the first time a certain risk assessment level is
reached and the correctness of the algorithm's recommendation at that point.
We map to each risk assessment level the proportion of errors across all instances. We chose
two stopping rates that are not supported by theory but are recommended in practice
\cite{garivier2016optimal}.
Figure~\ref{fig::boxplots} (Left)
illustrates the isotonic curve fitted on our observations and suggests that even the most
 lenient stopping threshold $\ln((\ln(t)+1)/ \delta) $ results in much lower empirical probability of error than the risk assessment. 
 In the following, we use the stopping threshold $\ln((\ln(t)+1)/ \delta) $.

 \begin{figure*}[hbt]
\begin{subfigure}{.49\textwidth}
  \centering
        \includegraphics[width=1\linewidth]{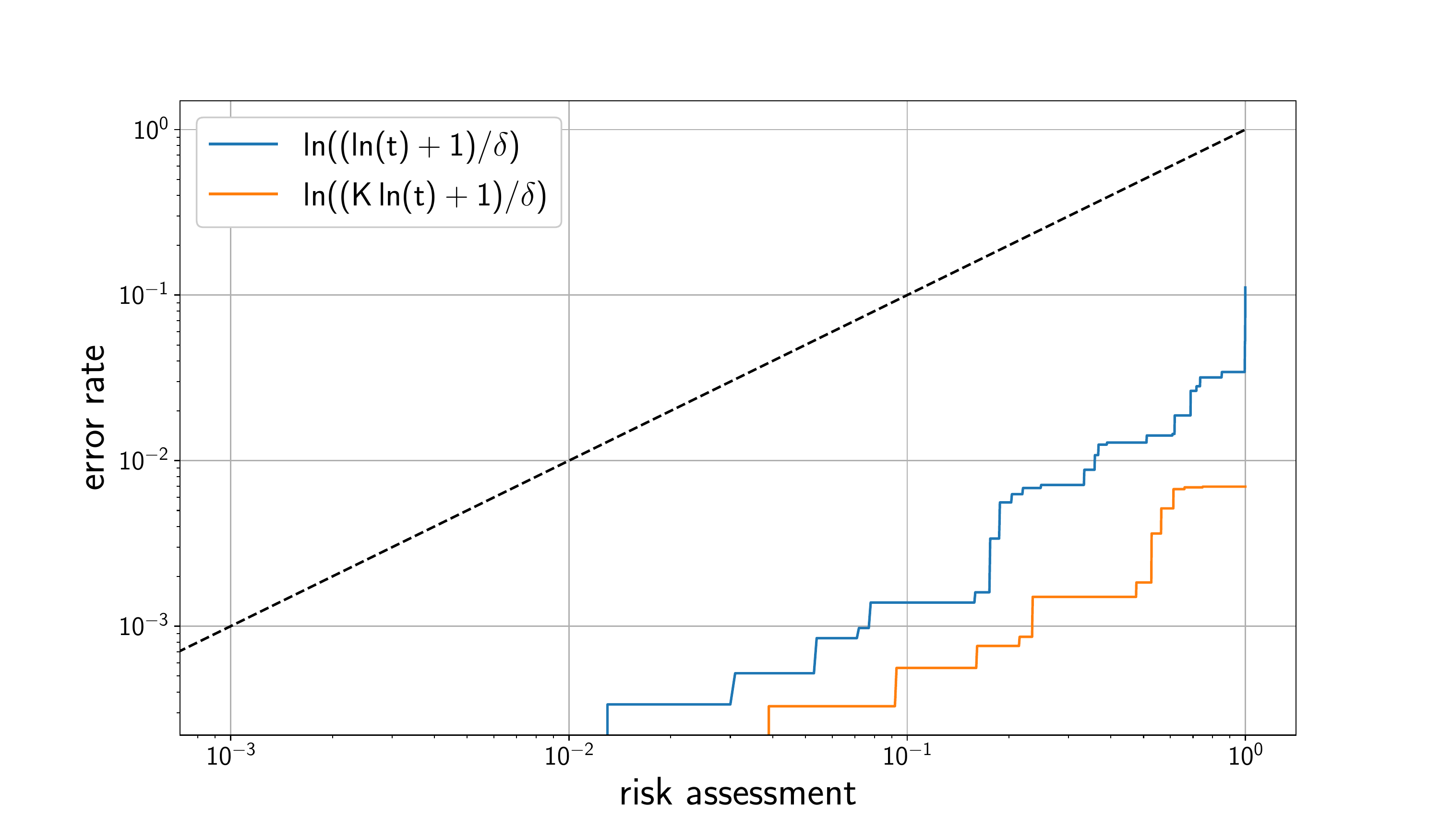}
        \label{fig::isotonic}
    \end{subfigure}
   \begin{subfigure}{.49\textwidth}
  \centering
  \includegraphics[width=1\linewidth]{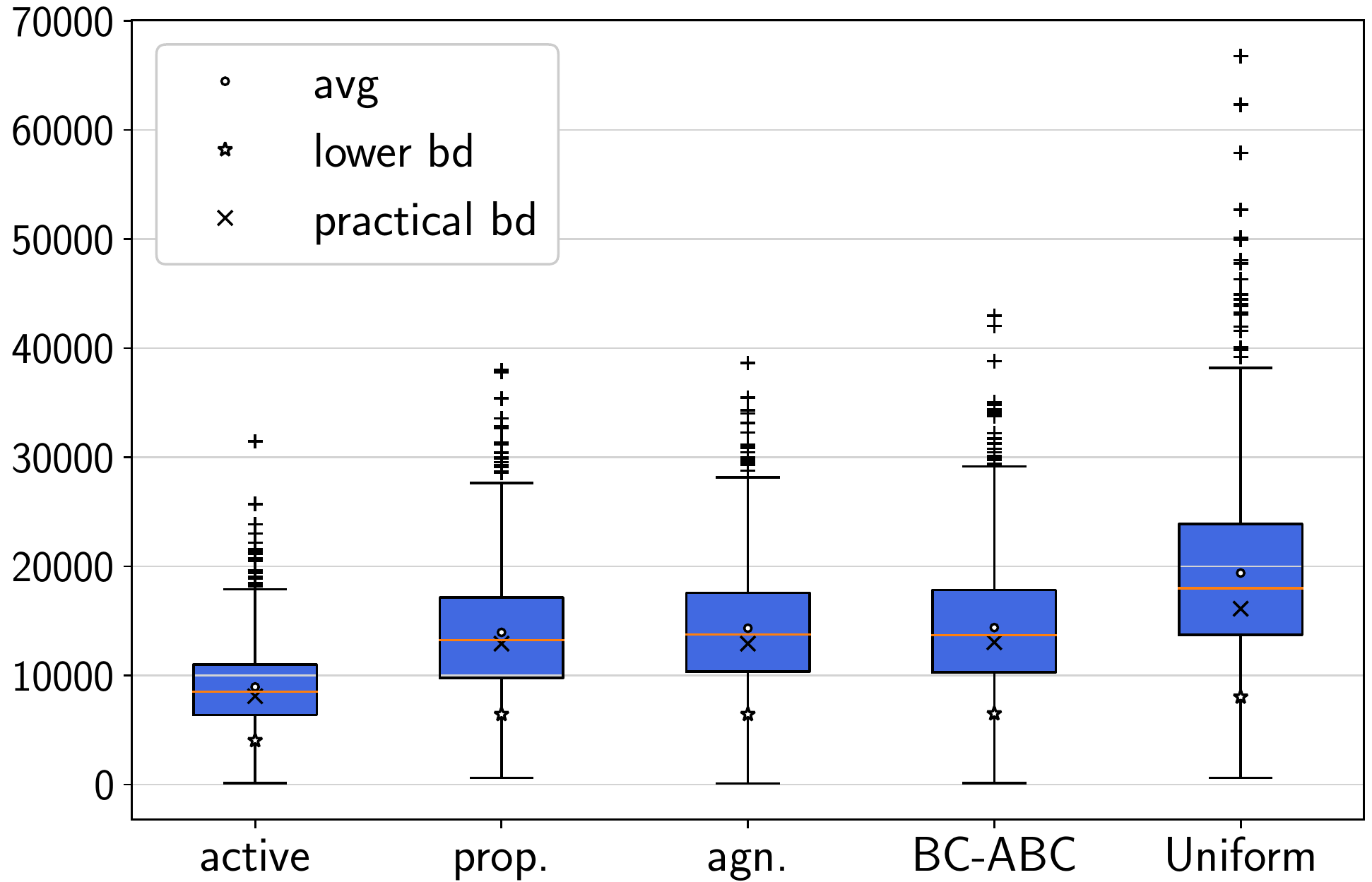}
        \label{fig::notequal_a_b}
     \end{subfigure}
\caption{(Left) Risk assessment calibration on a log-log scale. (Right) Stopping time boxplot for $\bm{\mu} = [0.1\: 0.4\: 0.3; 0.2\: 0.5\: 0.2; 0.5\: 0.1\: 0.1] \in [0,1]^{(K+1) \times J} $
when $\vbeta = [1/3,1/3,1/3], \valpha = [0.4, 0.5, 0.1]$ with Bernoulli distributions.}
\label{fig::boxplots}
\end{figure*}

In our second experiment \footnote{Code at \url{https://gitlab.com/ckatsimerou/abc_s_public}}, we generated 3000 Bernoulli bandit instances with $ K = 2 $ and a random number of
subpopulations $ J $ between 2 and 10.
Each subpopulation-arm’s mean $\mu_{a,i}$ is drawn uniformly at random from $[0, 1]$, and the subpopulation frequency vector $\valpha$
is drawn from a Dirichlet$(10)$ distribution.
Table~\ref{tab::sample complexity} reports the average stopping time of
each algorithm across all bandit instances. On average, the T-a-S algorithms at all modes stop at similar times,
and all adaptive sampling methods terminate faster than
uniform sampling.

\begin{table}[ht]
\caption{Average stopping time. Description in text.}
\centering
\begin{tabular}{c c c c c}
\hline\hline
T-a-S (active)  & T-a-S (proportional) & T-a-S (agnostic) & BC-ABC &  Uniform \\ [0.5ex]
\hline %
14871 & 15231 &  15444 & 15279 &  21586  \\
\hline
\end{tabular}
\label{tab::sample complexity}
\end{table}

To better understand the role of $\vbeta$ and $\valpha$, we ran the algorithms on a specific model (see Figure~\ref{fig::boxplots}, Right)
with $\valpha \neq \vbeta $. In this case, the optimal proportions are constrained by the frequencies
of the subpopulation for passive interaction modes.
The expected number of samples needed to identify the ABC-S solution is
lower for the active policy, which has an additional degree of freedom in its
sampling strategy. The \textit{proportional} interaction mode and the \textit{agnostic} interaction modes perform similarly.
As expected, all the proposed strategies outperform the uniform sampling rule.
We contrast the stopping time with the lower
bound $ \text{kl}(\delta, 1-\delta)T^*(\bm{\mu})$, and with a more practical version, which indicates, approximately, the first time at which the GLR statistic crosses the
threshold, i.e.\ solving $t = \ln((\ln(t)+1)/\delta)T^*(\bm{\mu})$, as was done
in \cite{purex.games}.
All adaptive algorithms perform well on this instance, with their average runtime being very close to their respective practical bound.

\subsection{Application to A/B/n experiment}
We evaluate the algorithms on data collected from an actual A/B/n
experiment, which compares different copies of a component of the
webpage, in order to identify the ones better than the default copy.
The metric of interest is whether the visitor clicked at least once
during the experiment to the next page after getting exposed to one of
the variants. For this setup we considered $K=2$ copies competing
against the control, with each copy being treated as an arm. Due to
global traffic, the data exhibits strong seasonality patterns within a
day, as seen in Figure~\ref {fig::seasonal_data}, in which every point corresponds to
click-through rate per six hours (quarter of day) for 12 consecutive
days. We treat the $J=4$ seasons as i.i.d.\ subpopulations. Within each season
we shuffled the data to eliminate the weekly trend.

The summary statistics of the dataset, together with the characteristic times
and the optimal weights for each T-a-S mode can be found in Appendix~\ref{appx:booking.data.lbs}.
Note that the small gaps between the arm means makes this practical ABC-S problem much harder than
the synthetically generated examples.

We tested all algorithms described in Section~\ref{sec:simulations}.
Each algorithm terminates when it reaches for the first time $\hat\delta_t \le 0.1$ or outputs a risk assessment on the
recommendation if it runs out of samples, which in this experiment occurs after $1.4 \cdot 10^7$ observations.
Here, we weigh the importance $\vbeta$ of each season equally to its observed frequency $\valpha$.
Doing so, we do not expect large performance discrepancies between the different T-a-S interaction modes, which is confirmed
by their characteristic times (Appendix~\ref{appx:booking.data.lbs}).
The observations from
Fig.~\ref{fig::risks}
are similar to the results from the numerical simulations:
 adaptive sampling achieves lower sample complexity over uniform sampling and T-a-S for the active interaction mode
 terminates faster than for the passive modes.
 All algorithms yield the correct recommendation, but not with the same risk assessment. All T-a-S algorithms terminated within
 the available sample size, BC-ABC almost terminated and output a risk assessment slightly above 0.1 and uniform's risk assessment was 0.67.
 Of course, when viewing seasonality as a subpopulation, the \textit{active} mode is unrealistic, but it is still informative to see that it can be very economical in hard problems in which sampling the subpopulations
 actively is an option.
 In this instance, \textit{proportional}, \textit{agnostic} and \textit{oblivious} modes terminated at similar times. However, we would recommend using the \textit{proportional}
 mode, given that we expect it to never perform worse than the other passive modes on average.
One should not be surprised by the curve for the uniform sampling, this policy was stopped before convergence because it ran out of samples.

Lastly, here we assumed that seasons occur in i.i.d.\ fashion, but in reality there is temporal dependence between them. This imposes
extra constraints on the optimal weights and increases the sample complexity.
However, we do not expect this to be detrimental for cases in which seasons alternate frequently
and full cycles are observed often, as was the case with our example.

\begin{figure*}[hbt]
	\begin{subfigure}{.49\textwidth}
  	\centering
 	\includegraphics[width=1\textwidth]{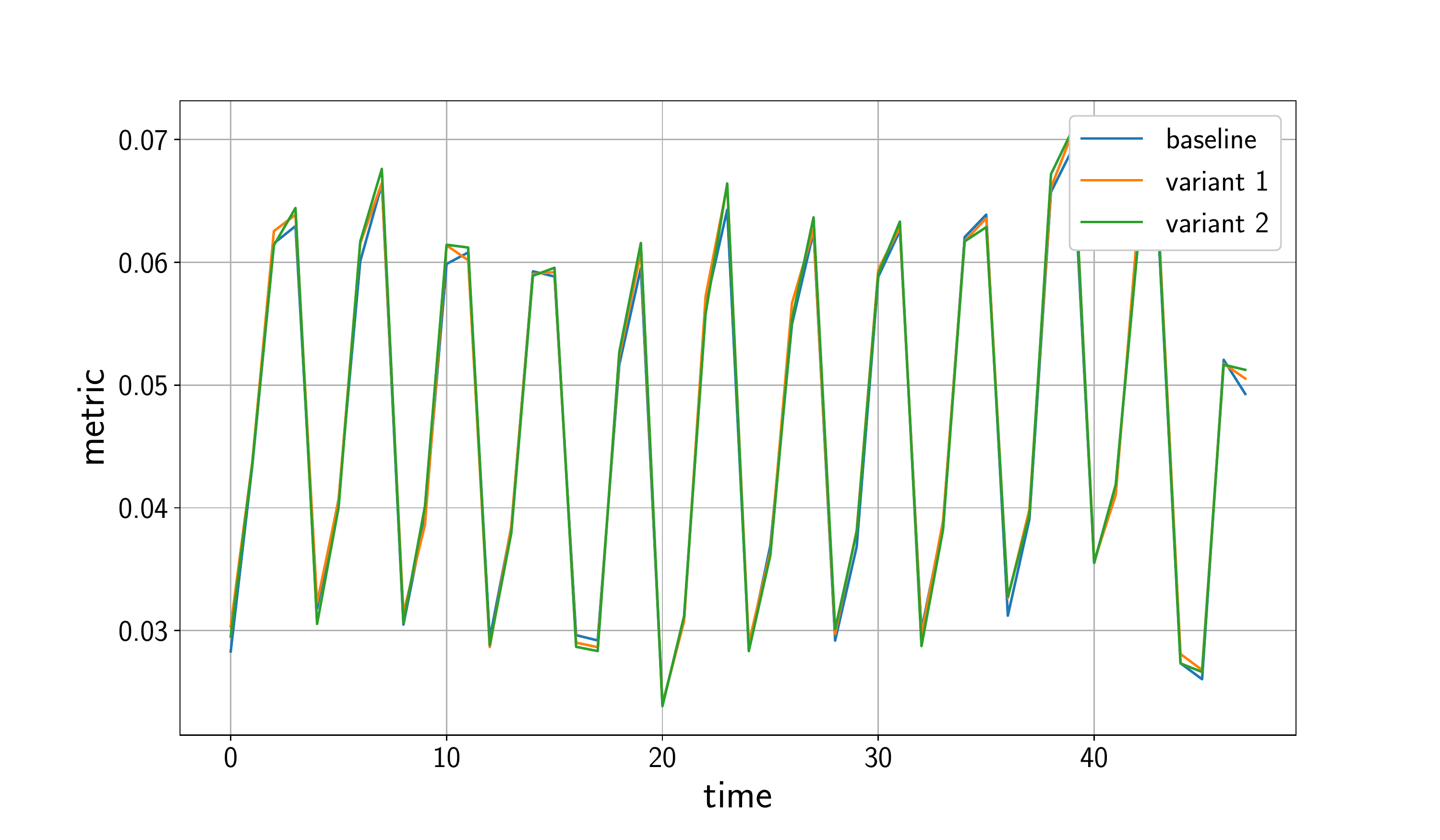}
        \caption{Click-through rate per 6 hours for 12 days.}
        \label{fig::seasonal_data}
    	\end{subfigure}
   	\begin{subfigure}{.49\textwidth}
 	 \centering
  	\includegraphics[width=1\textwidth]{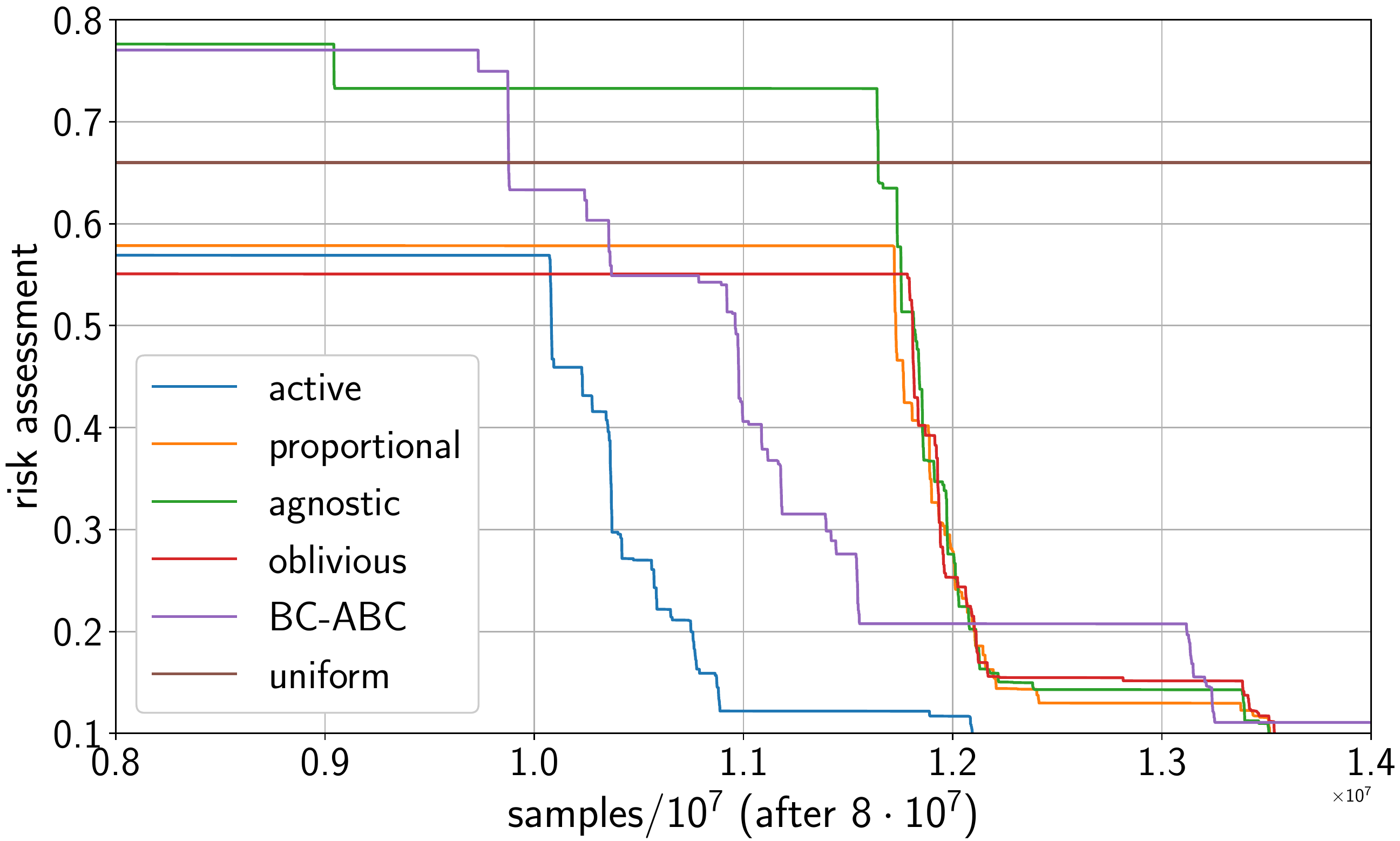}
          \caption{Risk assessment over time.}
        \label{fig::risks}
     	\end{subfigure}
\caption{Real data and results.}
\end{figure*}

\section{Conclusion}

In this work, we considered the pure exploration task of identifying all
the arms that are better than a control arm in the presence of
subpopulations (ABC-S). We design asymptotically optimal policies for
this problem under different assumptions on the mode of interaction between the learner and the bandit. We observed that the
\textit{active} mode, in which the learner decides which subpopulation it
samples, may significantly reduce decision times. On the other hand,
the other modes, in which the learner has to respect the natural
proportions of the different subpopulations (i.e., in
\textit{proportional} and \textit{agnostic} modes) produce more modest
effects, except when the subpopulations differ significantly in
variances. Finally, we proposed a natural way to provide anytime
decisions with risk guarantees in the Track-and-Stop framework.

\section{Potential Societal Impact}

The contributions presented in this work are mostly related to methods and, as such, do not have a direct expected societal impact. This being said, a potential concern that will need to be addressed more carefully in subsequent applications of these methods is the use of subpopulation information, which could be exploited to target specific user behaviour or characteristics. In the use case considered in Section 4.2, the subpopulations correspond to time slots that are used to model seasonality in the user responses, which does not raise any specific ethical concern. However, in cases where the subpopulations are formed using characteristics of individual users, the impact needs to be assessed more thoroughly. Note that in such cases, restricting to one of the more conservative modes of interaction (i.e.\ agnostic or even oblivious) may become necessary in order to prevent undue use of population-dependent information. \\

{\bf \large{Acknowledgment}}
\\
The authors would like to thank the anonymous reviewers whose comments and questions helped improve the clarity of this manuscript. A. Garivier acknowledges the support of the Project IDEXLYON of the University of Lyon, in the framework of the Programme Investissements d’Avenir (ANR-16-IDEX-0005), and Chaire SeqALO (ANR-20-CHIA-0020).

\clearpage
\newpage

\bibliographystyle{abbrvnat}
\bibliography{refs}

\clearpage
\newpage
\appendix

\section{General form of the characteristic time}
\label{app:lowerbd}
\subsection{Proof of Theorem~\ref{th:generic_sample_comp}}
\samplecompl*

\begin{proof}
  Using the transportation lemma from \cite{kaufmann2016complexity}
  and recalling that $N_{a,i}(t)$ is the number of draws of arm $a$ in
  subpopulation $i$ up to time $t$, we have for any safely calibrated
  policies
\[
\forall \vlambda \in \Alt_{\vbeta}(\vmu), \;
\sum_{i=1}^J \sum_{a=0}^K \mathbb{E}_\vmu[N_{a,i}(\tau_\delta)]
d(\mu_{a,i}, \lambda_{a,i}) \geq \textnormal{kl}(\delta, 1- \delta) \;.
\]
Therefore,
\begin{align*}
\textnormal{kl}(\delta, 1- \delta)
& \leq
\inf_{\vlambda \in \Alt_{\vbeta}(\vmu)}
\sum_{a=0}^K \sum_{i=1}^J \mathbb{E}_\vmu[N_{a,i}(\tau_\delta)]
d(\mu_{a,i}, \lambda_{a,i}) \\
& =
\mathbb{E}_\vmu[\tau_\delta]
\inf_{\vlambda \in \Alt_{\vbeta}(\vmu)}
\sum_{a=0}^K \sum_{i=1}^J
\frac{\mathbb{E}_\vmu[N_{a,i}(\tau_\delta)] }{\mathbb{E}_\vmu[\tau_\delta] }
d(\mu_{a,i}, \lambda_{a,i}) \\
& \leq
\mathbb{E}_\vmu[\tau_\delta]
\sup_{\w \in \mathcal{C}}
\inf_{\vlambda \in \Alt_{\vbeta}(\vmu)}
\sum_{a=0}^K \sum_{i=1}^J
w_{a,i}
d(\mu_{a,i}, \lambda_{a,i})\;.
\end{align*}
In the last inequality, we used the fact that the normalized expected numbers of draws
satisfy the set of constraints defined by $\mathcal{C} \subset \Sigma_{(K+1)J}$.
Using $\textnormal{kl}(\delta, 1-\delta) \sim \ln(1/\delta)$ when $\delta$
tends to 0 gives the first result.

We denote
$\Lambda(\w, \vlambda, \vmu) \df \sum_{a=0}^K \sum_{i=1}^J w_{a,i}
d(\mu_{a,i}, \lambda_{a,i})$. To obtain the second result, we will
simplify the expression of $T^\star(\vmu)^{-1}$.  Using that the KL
divergences and the weights are positive, for $\vlambda$ to be in the
alternative, one of the two following conditions need to be met: (1)
there exists $a \in \mathcal{S}_{\vbeta}(\vmu)$ such that
$\lambda_a < \lambda_0$. (2) there exists
$a \in \mathcal{S}_{\vbeta}^-(\vmu) \df \setc{a \in [K]}{\mu_a < \mu_0}$ such that
$\lambda_a > \lambda_0$.

For this reason, one has
\[
\inf_{\vlambda \in \Alt_{\vbeta}(\vmu)}
\Lambda(\w, \vlambda, \vmu)
= \min \left(
\min_{a \in \mathcal{S}_{\vbeta}(\vmu)}
\inf_{\vlambda: \lambda_a < \lambda_0} \Lambda(\w, \vlambda, \vmu),
\min_{a \in \mathcal{S}_{\vbeta}^-(\vmu)}
\inf_{\vlambda: \lambda_a > \lambda_0} \Lambda(\w, \vlambda, \vmu)
\right) \;.
\]

We obtain the desired result by remarking that the inner optimization programs $\inf_\vlambda$ are each achieved on the boundary (the constraint being satisfied with equality) where they coincide,
and that $\{1, \dotsc, K\} = \mathcal{S}_{\vbeta}(\vmu) \cup \mathcal{S}_{\vbeta}^-(\vmu)$.
\end{proof}

\subsection{Proof of Proposition~\ref{prop:charsingle}}\label{appx:charsingle}
In the particular case when $J=1$, the expression of the characteristic time can be simplified.
\label{sec:proof_char_single}
\charsingle*
\begin{proof}
  The first part of the proof can be obtained using similar argument
  than for Theorem~\ref{th:generic_sample_comp}.  The missing part is
  the simplification of the expression of $T^\star(\vmu)$.

We denote
$\Lambda(\w, \vlambda, \vmu) \df \sum_{a=0}^K w_{a}
d(\mu_{a}, \lambda_{a})$.
Following the reasoning from the proof of
Theorem~\ref{th:generic_sample_comp}
one of the two following conditions needs to be met: (1)
there exists $a \in \mathcal{S}_{\vbeta}(\vmu)$ such that
$\lambda_a < \lambda_0$. (2) there exists
$a \in \mathcal{S}_{\vbeta}^-(\vmu) \df \setc{a \in [K]}{\mu_a < \mu_0}$ such that
$\lambda_a > \lambda_0$.

For this reason, one has
\[
\inf_{\vlambda \in \Alt_{\vbeta}(\vmu)}
\Lambda(\w, \vlambda, \vmu)
= \min \left(
\min_{a \in \mathcal{S}_{\vbeta}(\vmu)}
\inf_{\vlambda: \lambda_a < \lambda_0} \Lambda(\w, \vlambda, \vmu),
\min_{a \in \mathcal{S}_{\vbeta}^-(\vmu)}
\inf_{\vlambda: \lambda_a > \lambda_0} \Lambda(\w, \vlambda, \vmu)
\right) \;.
\]
In this simpler case, it is possible to obtain an explicit formula for this infimum.
We start from
\[
T^\star(\vmu)^{-1} = \sup_{\w \in \Sigma_{K+1}} \min \left(
\min_{a \in \mathcal{S}_{\vbeta}(\vmu)}
\inf_{\vlambda: \lambda_a < \lambda_0} \Lambda(\w, \vlambda, \vmu),
\min_{a \in \mathcal{S}_{\vbeta}^-(\vmu)}
\inf_{\vlambda: \lambda_a > \lambda_0} \Lambda(\w, \vlambda, \vmu)
\right)\;.
\]
Let us focus on the case, $\lambda_a < \lambda_0$ and
fix an index $a \in  \mathcal{S}_{\vbeta}(\vmu)$. $\Lambda$
is always smaller when all the $\lambda_b$ for $b \neq 0$
and $b\neq a$ coincides with $\mu_b$.
This gives,
\[
\min_{a \in \mathcal{S}_{\vbeta}(\vmu)}
\inf_{\vlambda: \lambda_a < \lambda_0} \Lambda(\w, \vlambda, \vmu)
= \min_{a \in \mathcal{S}_{\vbeta}(\vmu)}
\inf_{\vlambda: \lambda_a \leq \lambda_0}
w_0 d(\mu_0, \lambda_0) + w_a d(\mu_a, \lambda_a) \;.
\]
We consider the Lagrangian function,
$L(\lambda_0, \lambda_a, q)= w_0 d(\mu_0, \lambda_0)
+ w_a d(\mu_a, \lambda_a) + q(\lambda_a - \lambda_0)$.
Differentiating with respect to $\lambda_0$ and $\lambda_a$ brings the condition
\[
\lambda_0^\star = \lambda_a^\star = \lambda_{a,0}^\star =
\argmin_{\lambda} w_0 d(\mu_0, \lambda) + w_a d(\mu_a, \lambda)
= \frac{w_0}{w_0 + w_a } \mu_0 + \frac{w_a}{w_0 + w_a} \mu_a  \;.
\]

Recalling, $d_\text{mid}(w_a, \mu_a, w_b, \mu_b)
  \df
  \inf_v w_a d(\mu_a, v) + w_b d(\mu_b, v)$ one has,
\begin{equation}
\label{eq:S_plus}
\min_{a \in \mathcal{S}_{\vbeta}(\vmu)}
\inf_{\vlambda: \lambda_a < \lambda_0} \Lambda(\w, \vlambda, \vmu) =
\min_{a \in \mathcal{S}_{\vbeta}(\vmu)}
 d_\text{mid}(w_0, \mu_0, w_a, \mu_a) \;.
\end{equation}
Solving the optimization program for $a \in \mathcal{S}_{\vbeta}^-(\vmu)$ and under the
constraint $\lambda_a > \lambda_0$, gives
the exact same set of constraints and optimal solution, i.e.
\begin{equation}
\label{eq:S_minus}
\min_{a \in \mathcal{S}_{\vbeta}^-(\vmu)}
\inf_{\vlambda: \lambda_a > \lambda_0} \Lambda(\w, \vlambda, \vmu)
=
\min_{a \in \mathcal{S}_{\vbeta}^-(\vmu)}  d_\text{mid}(w_0, \mu_0, w_a, \mu_a) \;.
\end{equation}
Bringing Equation~\ref{eq:S_plus} and Equation~\ref{eq:S_minus}
together and remarking that
$[K] = \mathcal{S}_{\vbeta}(\vmu) \,
\cup \, \mathcal{S}_{\vbeta}^-(\vmu)$ gives the announced result.
\end{proof}

\subsection{Link between ABC and BAI}
In the particular case, of Gaussian distributions with a known variance $\sigma^2$, for any $\vmu \in \mathcal{L}$, one can easily
create a BAI instance with the same characteristic time as $T^\star(\vmu)$.

\begin{restatable}{lemma}{eqABCBAI}
\label{lemma:ABCBAI}
  Let $\vmu \in \mathcal{L}$ where all the arms are Gaussian
  distributions with known variance $\sigma^2$. Let $\mu_0$ be the
  mean of the control arm. We define $\widetilde{\vmu}$ as follows
  \[
\widetilde{\mu}_k = \left\{
    \begin{array}{ll}
        2 \mu_0 - \mu_k & \mbox{if } \mu_k > \mu_0 \\
        \mu_k & \mbox{otherwise.}
    \end{array}
\right.
\]
  By denoting
  $T^{\star}_{\text{BAI}}(\vmu)$ the characteristic time for the BAI
  problem with a bandit instance $\vmu$, then one has
\[
T^\star(\vmu) = T^{\star}_{\text{BAI}}(\widetilde{\vmu}) \;.
\]
\end{restatable}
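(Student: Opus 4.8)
The plan is to reduce both characteristic times to the very same max-min program over the simplex, using that we are in the single-population case $J=1$. First I would invoke Proposition~\ref{prop:charsingle}, which already gives
\[
T^\star(\vmu)^{-1} = \sup_{\w \in \Sigma_{K+1}} \min_{b \neq 0} d_\textnormal{mid}(w_0, \mu_0, w_b, \mu_b) \;.
\]
The simplification specific to the homoscedastic Gaussian case is that $d(\mu, \mu') = (\mu - \mu')^2/(2\sigma^2)$, so the optimal common location in $d_\textnormal{mid}$ is the weighted average and a direct computation yields $d_\textnormal{mid}(w_a, \mu_a, w_b, \mu_b) = \frac{w_a w_b}{w_a + w_b} \cdot \frac{(\mu_a - \mu_b)^2}{2\sigma^2}$. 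The crucial observation is that this expression depends on the two means \emph{only} through the squared gap $(\mu_a - \mu_b)^2$, hence only through $|\mu_a - \mu_b|$.

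Next I would identify the matching BAI program. Under the reflection, every arm $k \neq 0$ with $\mu_k > \mu_0$ is sent to $\widetilde\mu_k = 2\mu_0 - \mu_k < \mu_0$, while arms with $\mu_k < \mu_0$ are left unchanged; since $\vmu \in \mathcal{L}$ forbids ties, all non-control arms of $\widetilde\vmu$ lie strictly below $\widetilde\mu_0 = \mu_0$. Consequently the control arm $0$ is the unique best arm of $\widetilde\vmu$, and the BAI lower bound of \citet[Lemma 3]{garivier2016optimal} reads
\[
T^{\star}_{\text{BAI}}(\widetilde\vmu)^{-1} = \sup_{\w \in \Sigma_{K+1}} \min_{b \neq 0} d_\textnormal{mid}(w_0, \widetilde\mu_0, w_b, \widetilde\mu_b) \;,
\]
so that both programs already share the same structure, with the inner minimum ranging over the identical index set $b \neq 0$.

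It then remains to check that the reflection preserves the squared gap to the control. For $\mu_b > \mu_0$ one has $\widetilde\mu_0 - \widetilde\mu_b = \mu_0 - (2\mu_0 - \mu_b) = \mu_b - \mu_0$, and for $\mu_b < \mu_0$ one has $\widetilde\mu_0 - \widetilde\mu_b = \mu_0 - \mu_b$; in either case $(\widetilde\mu_0 - \widetilde\mu_b)^2 = (\mu_0 - \mu_b)^2$. Substituting the Gaussian form of $d_\textnormal{mid}$ into both displays and applying this identity term by term shows that the two max-min objectives agree for every $\w$, hence $T^\star(\vmu)^{-1} = T^{\star}_{\text{BAI}}(\widetilde\vmu)^{-1}$ and the claim follows by taking reciprocals.

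There is no serious obstacle: the whole argument rests on the single structural fact that, in the homoscedastic Gaussian case, $d_\textnormal{mid}$ sees only the \emph{absolute} gap, so flipping the arms above the control below it leaves every pairwise transportation cost with the control untouched, while converting the ABC question ``is arm $b$ on the correct side of the control?'' into the BAI question ``is arm $b$ below the best arm?''. The only points that require care are confirming that the reflection places all variant arms strictly below the control (so that the control becomes the unique BAI best arm and the index sets coincide) and that the variance $\sigma^2$ is shared across arms, so the closed form for $d_\textnormal{mid}$ applies uniformly.
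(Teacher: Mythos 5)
Your proposal is correct and follows essentially the same route as the paper's own proof: both reduce the ABC side to $\sup_{\w}\min_{b\neq 0} d_\textnormal{mid}(w_0,\mu_0,w_b,\mu_b)$ via the single-population characterization, observe that the reflection makes the control the unique best arm of $\widetilde\vmu$, invoke Lemma 3 of \citet{garivier2016optimal} for the BAI side, and match the two objectives term by term using that the homoscedastic Gaussian transportation cost depends only on the squared gap, which the reflection preserves. The only cosmetic difference is that you use the compact form $d_\textnormal{mid} = \frac{w_a w_b}{w_a+w_b}\frac{(\mu_a-\mu_b)^2}{2\sigma^2}$ whereas the paper carries out the equivalent computation in terms of $(w_0+w_b)\,I_{w_0/(w_0+w_b)}$.
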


\begin{proof}
In the particular case of Gaussian distributions with known variance $\sigma^2$,
easy calculation brings
\[
T^\star(\vmu)^{-1} = \sup_{w \in \Sigma_{K+1}}
\min_{b \neq 0} d_\text{mid}(w_0, \mu_0, w_b, \mu_b)
=
\min_{b \neq 0} \frac{\left(\mu_0 - \mu_b\right)^2}{2}
((1-\alpha_b)^2 w_0 + \alpha_b^2 w_b)
\]
with $\alpha_b = \frac{w_0}{w_b + w_0}$.

On the instance $\tilde{\vmu}$, first note that $\tilde{\mu}_0$ is the best arm by construction.
For an index $k$ such that $\mu_k > \mu_0$, one has
\[
\tilde{\mu}_0 - \tilde{\mu}_k = \mu_0 - (2\mu_0 - \mu_k) = \mu_k - \mu_0 > 0 \;.
\]
For this reason, $\forall k \in [K], \tilde{\mu}_0 > \tilde{\mu}_k$.
Using Lemma 3 from \cite{garivier2016optimal}, by defining
$I_\gamma(\mu_1, \mu_2) \df \gamma d(\mu_1, \gamma \mu_1 + (1-\gamma) \mu_2)
+ (1-\gamma) d(\mu_2, \gamma \mu_1 + (1-\gamma) \mu_2)$, one has
\[
T^{\star}_{\text{BAI}}(\widetilde{\vmu}) =
\sup_{w \in \Sigma_{K+1}}
\min_{b \neq 0} \; (w_0 + w_b) I_{\frac{w_0}{w_0 + w_b}}(\mu_0, \mu_b) \;.
\]
Furthermore, letting $\alpha_b = w_0/(w_0 + w_b)$:
\begin{align*}
(w_0 + w_b) I_{\alpha_b}(\mu_0, \mu_b)
&=
(w_0 + w_b) \left(\alpha_b \frac{(1- \alpha_b)^2 (\mu_0 - \mu_b)^2}{2 \sigma^2}
+(1- \alpha_b)
\frac{\alpha_b^2 (\mu_0 - \mu_b)^2}{2 \sigma^2} \right)
\\
& =
\frac{(\mu_0 - \mu_b)^2}{2 \sigma^2}
\left( (1 - \alpha_b)^2 w_0 + \alpha_b^2 w_b \right)\;.
\end{align*}
Plugging this in the expression of $T^{\star}_{\text{BAI}}(\widetilde{\vmu})$ gives the announced result.
\end{proof}

\section{Results for specific modes of interaction}
\label{app:interactions}

\subsection{Agnostic mode}
\begin{lemma}
\label{lemma:agnostic}
For any agnostic policy where $A_t$ is chosen knowing
$\mathcal{F}_{t-1}$ but independently from $I_t$,
when defining
$N_{a,j}(t) = \sum_{s=1}^t \mathds{1}(A_s = a \,
\cap \, I_s = j)$
and $N_{a}(t) = \sum_{s=1}^t \mathds{1}(A_s = a)$,
then
$$
\forall a \in \{0, \dotsc, K\}, \forall j \in \{1, \dotsc, J \},
\forall t \geq 1, \quad
\mathbb{E}_\vmu[N_{a,j}(t) ] =
\alpha_j \mathbb{E}_\vmu[N_a(t)]
$$
\end{lemma}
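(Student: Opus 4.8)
The plan is to prove the identity $\mathbb{E}_\vmu[N_{a,j}(t)] = \alpha_j \mathbb{E}_\vmu[N_a(t)]$ by exploiting the conditional independence structure of the agnostic mode, where the arm $A_s$ is chosen based only on $\mathcal{F}_{s-1}$ while the subpopulation $I_s$ is drawn independently from $\valpha$. The natural tool here is the tower property of conditional expectation combined with a summation over the individual time steps. First I would write out the definition $N_{a,j}(t) = \sum_{s=1}^t \mathds{1}(A_s = a, I_s = j)$ and push the expectation inside the finite sum by linearity, reducing the problem to analyzing each summand $\mathbb{E}_\vmu[\mathds{1}(A_s = a, I_s = j)] = \mathbb{P}_\vmu(A_s = a, I_s = j)$.

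The key step is to condition on $\mathcal{F}_{s-1}$ and use the defining property of the agnostic mode (Figure~\ref{mode:agnostic}): at time $s$, $A_s$ is $\mathcal{F}_{s-1}$-measurable (a function of the past) while $I_s \sim \valpha$ is drawn independently of both $\mathcal{F}_{s-1}$ and $A_s$. Concretely, I would argue that
\[
\mathbb{P}_\vmu(A_s = a, I_s = j \mid \mathcal{F}_{s-1}) = \mathbb{P}_\vmu(A_s = a \mid \mathcal{F}_{s-1}) \cdot \mathbb{P}_\vmu(I_s = j) = \alpha_j \, \mathds{1}\text{-type term for } A_s,
\]
where the factorization holds precisely because $I_s$ is sampled independently after the arm choice is committed. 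Taking expectations and using the tower property yields $\mathbb{P}_\vmu(A_s = a, I_s = j) = \alpha_j \, \mathbb{P}_\vmu(A_s = a)$ for each fixed $s$.

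Finally I would sum this per-step identity over $s = 1, \dots, t$ and reassemble the sums into the counting variables: $\sum_{s=1}^t \mathbb{P}_\vmu(A_s = a, I_s = j) = \alpha_j \sum_{s=1}^t \mathbb{P}_\vmu(A_s = a)$, which is exactly $\mathbb{E}_\vmu[N_{a,j}(t)] = \alpha_j \mathbb{E}_\vmu[N_a(t)]$. The statement then follows for all $a$, $j$, and $t \geq 1$.

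The main obstacle, such as it is, lies in being careful about the measurability and independence claims rather than in any hard computation: one must justify rigorously that in the agnostic mode $A_s$ depends on the past only through $\mathcal{F}_{s-1}$ and that $I_s$ is genuinely independent of the pair $(\mathcal{F}_{s-1}, A_s)$. The cleanest way to make this airtight is to note that $\mathds{1}(A_s = a)$ is $\mathcal{F}_{s-1}$-measurable (since the policy selects the arm before observing the subpopulation) so it can be pulled out of the conditional expectation given $\mathcal{F}_{s-1}$, leaving only $\mathbb{E}_\vmu[\mathds{1}(I_s = j) \mid \mathcal{F}_{s-1}] = \alpha_j$. This is essentially a bookkeeping argument, and the proof should be short.
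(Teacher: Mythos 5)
Your proof is correct and takes essentially the same route as the paper's: both reduce, by linearity, to the per-step identity $\mathbb{P}_\vmu(A_s = a,\, I_s = j) = \alpha_j\, \mathbb{P}_\vmu(A_s = a)$ obtained from the independence of $A_s$ and $I_s$, and then sum over $s$ (the paper invokes marginal independence directly, skipping your explicit conditioning on $\mathcal{F}_{s-1}$). One small caveat: your closing remark that $\mathds{1}(A_s = a)$ is $\mathcal{F}_{s-1}$-measurable holds only for deterministic policies---under a randomized policy $A_s$ carries extra randomness beyond $\mathcal{F}_{s-1}$---but your displayed factorization only requires $I_s$ to be independent of the pair $(\mathcal{F}_{s-1}, A_s)$, which is exactly the agnostic-mode assumption, so the argument stands.
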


\begin{proof}
\begin{align*}
\mathbb{E}_\vmu[N_{a,j}(t) ] &= \sum_{s=1}^t \mathbb{P}(A_s = a \, \cap \, I_s = j)
= \sum_{s=1}^t \mathbb{P}_\vmu(A_s = a | I_s = j) \mathbb{P}(I_s = j) \\
&=\sum_{s=1}^t \alpha_j \mathbb{P}_\vmu(A_s = a | I_s = j)
= \sum_{s=1}^t \alpha_j \mathbb{P}_\vmu(A_s = a ) \\
&= \alpha_j\mathbb{E}_\vmu[N_{a}(t) ]\;,
\end{align*}
where in the third equality, we have used that the action $A_t$ is
selected independently from the population indicator $I_t$.
\end{proof}

\subsection{Proportional mode}
\begin{lemma}
\label{lemma:proportional}
For any proportional policy where $A_t$ is chosen knowing $\mathcal{F}_{t-1}$ and $I_t$,
when defining
$N_{a,j}(t) = \sum_{s=1}^t \mathds{1}(A_s = a \, \cap \, I_s = j)$ and $N_{a}(t) = \sum_{s=1}^t \mathds{1}(A_s= a)$, then
$$
\forall j \in \{1, \dotsc, J \},
\forall t \geq 1,
\quad
\sum_{a=0}^{K} \mathbb{E}_\vmu[N_{a,j}(t) ] = \alpha_j t \;.
$$
\end{lemma}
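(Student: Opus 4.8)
The plan is to reduce the claim to the elementary fact that exactly one arm is pulled per round, so that summing the joint draw counts over arms collapses to a plain count of subpopulation appearances, whose law is then governed solely by the exogenous distribution $\valpha$.

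First I would exchange the (finite) sum over arms with the expectation and unfold the definition of $N_{a,j}(t)$:
\[
\sum_{a=0}^{K} \mathbb{E}_\vmu[N_{a,j}(t)]
= \mathbb{E}_\vmu\Bigl[ \sum_{s=1}^t \sum_{a=0}^K \mathds{1}(A_s = a \, \cap \, I_s = j) \Bigr] \;.
\]
The key step is to notice that, for each fixed round $s$, the inner sum over arms simplifies: since the learner selects exactly one arm at every round, $\sum_{a=0}^K \mathds{1}(A_s = a) = 1$, and therefore
\[
\sum_{a=0}^K \mathds{1}(A_s = a \, \cap \, I_s = j)
= \mathds{1}(I_s = j) \sum_{a=0}^K \mathds{1}(A_s = a)
= \mathds{1}(I_s = j) \;.
\]
Crucially, this identity requires no independence between $A_s$ and $I_s$ — which in fact fails in the proportional mode, where $A_t$ is chosen knowing $I_t$ — so, in contrast to the argument of Lemma~\ref{lemma:agnostic}, no conditioning step on $I_s$ is needed here.

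It then remains to take expectations and use that in the proportional mode the subpopulation indicator is drawn from the fixed law $\valpha$, so that $\mathbb{P}(I_s = j) = \alpha_j$ for every $s$. Summing over $s = 1, \dotsc, t$ yields
\[
\sum_{a=0}^{K} \mathbb{E}_\vmu[N_{a,j}(t)]
= \sum_{s=1}^t \mathbb{P}(I_s = j)
= \alpha_j t \;.
\]
There is no genuine obstacle in this proof; the only subtlety worth flagging is that the marginal constraint characterizing the proportional mode stems purely from the one-pull-per-round budget together with the exogeneity of $\valpha$, and not from any independence between the arm choice and the subpopulation. This is precisely what distinguishes it from the stronger per-arm identity $\mathbb{E}_\vmu[N_{a,j}(t)] = \alpha_j \mathbb{E}_\vmu[N_a(t)]$ available in the agnostic mode.
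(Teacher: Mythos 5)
Your proof is correct and follows exactly the paper's argument: swap the sums, use that $\sum_{a=0}^K \mathds{1}(A_s = a) = 1$ to collapse the joint indicator to $\mathds{1}(I_s = j)$, and conclude from $\mathbb{P}_\vmu(I_s = j) = \alpha_j$. Your added remark that no independence between $A_s$ and $I_s$ is required (unlike in the agnostic case) is a correct and apt observation, but the underlying proof is the same.
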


\begin{proof}
\begin{align*}
\sum_{a=0}^K \mathbb{E}_\vmu[N_{a,j}(t) ] &= \sum_{s=1}^t
\sum_{a=0}^K
 \mathbb{E}_\vmu\left[ \mathds{1}(I_s = j)
\mathds{1}(A_s = a) \right] =
\sum_{s=1}^t
 \mathbb{E}_\vmu\left[ \mathds{1}(I_s = j)
 \sum_{a=0}^K
\mathds{1}(A_s = a) \right] \\
&= \sum_{s=1}^t \mathbb{P}_\vmu(I_s = j)
=\alpha_j t \;.
\end{align*}
\end{proof}

\subsection{Oblivious mode}
\label{app:oblivious}
In the oblivious mode, the subpopulations can not be observed by the learner. In this case,
we have
\begin{enumerate}
\item $\mathbb{E}\left[ X_t | A_t = a\right] = \sum_{i=1}^J \alpha_i \mu_{a,i}$ \;,
\item $X_t | A_t = a \sim \sum_{i=1}^J \alpha_i \nu_{a,i}$ \;.
\end{enumerate}
While with observable subpopulations the distributions are entirely characterized by their means, this is no longer the case with mixture distributions.
In particular, this requires defining a different alternative.
\[
\textnormal{Alt}(\nu) \df \{ \nu' : \forall a, \nu_a' = \sum_{i=1}^J \alpha_i \nu_{a,i}' \; \textnormal{with}  \; \nu'_{a,i} \in \mathcal{P}
\; \textnormal{and} \; \mathcal{S}(\nu') \neq \mathcal{S}(\nu) \; \} \;.
\]
\begin{proposition}
Let $\delta \in (0,1)$ and $\vbeta \in \mathbb{R}^J$. For any \textit{oblivious} strategy satisfying Equation~\ref{eq:p-value}
and any $\vmu \in \mathcal{L}$, the characteristic time satisfies
\begin{equation*}
\mathbb{E}_\vmu[\tau_{\delta}] \geq T_{\textnormal{oblivious}}^\star(\vmu) \,\textnormal{kl}(\delta, 1- \delta)
\quad
\textnormal{and}
\quad
\liminf_{\delta \to 0} \frac{\mathbb{E}_\vmu[\tau_\delta]}{\ln(1/\delta)}
\geq T_{\textnormal{oblivious}}^\star(\vmu) \;.
\end{equation*}
where
\begin{align}
T_{\textnormal{oblivious}}^\star(\vmu)^{-1} &= \sup_{\w \in \Sigma_{K+1}}
\inf_{\nu' \in \, \textnormal{\Alt}(\nu)}
\sum_{a=0}^K w_{a} \textnormal{KL}\left( \sum_{i=1}^J \alpha_i \nu_{a,i}, \sum_{i=1}^J \alpha_i \nu_{a,i}' \right)
 \;.
\end{align}
Furthermore,
\[
\forall \vmu \in \mathcal{L}, \quad
T_{\textnormal{oblivious}}^\star(\vmu) \geq T_{\textnormal{agnostic}}^\star(\vmu) \;.
\]
\label{prop:sample_comp_oblivious}
\end{proposition}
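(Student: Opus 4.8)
The plan is to prove the two claims in turn, reusing the machinery of Theorem~\ref{th:generic_sample_comp} for the lower bound and exploiting the joint convexity of the Kullback--Leibler divergence for the ordering.

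For the lower bound, I would first observe that in the oblivious mode the learner's only interaction with the bandit is through the choice of $A_t$: conditionally on $A_t = a$ the observation is a draw from the \emph{mixture} $\nu_a \df \sum_{i=1}^J \alpha_i \nu_{a,i}$, so the effective single distribution attached to arm $a$ is $\nu_a$ and the only sampling quantity the learner controls is $N_a(t)$, subject to $\sum_a N_a(t) = t$. Applying the transportation lemma of \cite{kaufmann2016complexity} to these mixture arms yields, for every $\nu' \in \Alt(\nu)$,
\[
\sum_{a=0}^K \mathbb{E}_\vmu[N_a(\tau_\delta)] \, \textnormal{KL}\!\left(\sum_{i=1}^J \alpha_i \nu_{a,i}, \sum_{i=1}^J \alpha_i \nu'_{a,i}\right) \geq \textnormal{kl}(\delta, 1-\delta)\,.
\]
From here the argument is verbatim that of Theorem~\ref{th:generic_sample_comp}: take the infimum over $\nu' \in \Alt(\nu)$, factor out $\mathbb{E}_\vmu[\tau_\delta]$, and bound the normalized counts $\mathbb{E}_\vmu[N_a(\tau_\delta)]/\mathbb{E}_\vmu[\tau_\delta]$ by the supremum over $\w \in \Sigma_{K+1}$, giving $\textnormal{kl}(\delta, 1-\delta) \leq \mathbb{E}_\vmu[\tau_\delta]/T_{\textnormal{oblivious}}^\star(\vmu)$. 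The asymptotic statement then follows from $\textnormal{kl}(\delta, 1-\delta) \sim \ln(1/\delta)$ as $\delta \to 0$.

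For the ordering $T_{\textnormal{oblivious}}^\star \geq T_{\textnormal{agnostic}}^\star$, I would work with the inverse quantities and compare the two max-min programs at a \emph{fixed} weight $u \in \Sigma_{K+1}$, recalling that $\mathcal{C}_{\textnormal{agnostic}}$ is exactly $\{w_{a,i} = \alpha_i u_a\}$, so that the agnostic inner objective reads $\inf_{\vlambda \in \Alt_{\vbeta}(\vmu)} \sum_a u_a \sum_i \alpha_i d(\mu_{a,i}, \lambda_{a,i})$. Given any competitor $\vlambda \in \Alt_{\vbeta}(\vmu)$, I would build $\nu'$ by letting $\nu'_{a,i} \in \mathcal{P}$ be the exponential-family member with mean $\lambda_{a,i}$. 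Since we are in the regime $\valpha = \vbeta$, the mean of the mixture arm is $\sum_i \alpha_i \lambda_{a,i} = \sum_i \beta_i \lambda_{a,i} = \lambda_a$, so the oblivious answer set $\mathcal{S}(\nu')$ coincides with $\mathcal{S}_{\vbeta}(\vlambda) \neq \mathcal{S}_{\vbeta}(\vmu)$ and hence $\nu' \in \Alt(\nu)$. The joint convexity of KL then gives
\[
\textnormal{KL}\!\left(\sum_{i=1}^J \alpha_i \nu_{a,i}, \sum_{i=1}^J \alpha_i \nu'_{a,i}\right) \leq \sum_{i=1}^J \alpha_i \, \textnormal{KL}(\nu_{a,i}, \nu'_{a,i}) = \sum_{i=1}^J \alpha_i \, d(\mu_{a,i}, \lambda_{a,i})\,,
\]
so the oblivious objective evaluated at this $\nu'$ is at most the agnostic objective at $\vlambda$. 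Taking the infimum over $\vlambda$ and then the supremum over $u$ yields $T_{\textnormal{oblivious}}^\star(\vmu)^{-1} \leq T_{\textnormal{agnostic}}^\star(\vmu)^{-1}$, which is the claim.

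I expect the main obstacle to be the bookkeeping in the ordering step rather than any deep difficulty: specifically, verifying that the constructed $\nu'$ actually lands in $\Alt(\nu)$, which hinges on the assumption $\valpha = \vbeta$ to identify the mixture means with the $\vbeta$-weighted means defining the ABC-S answer. The convexity inequality itself is the standard joint convexity (equivalently, the log-sum inequality) applied to the pair of mixtures, and the lower-bound half is a direct transcription of the already-proved Theorem~\ref{th:generic_sample_comp}.
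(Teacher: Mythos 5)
Your proposal is correct and takes essentially the same route as the paper: the lower bound is the transportation-lemma argument of Theorem~\ref{th:generic_sample_comp} applied to the mixture arms, and the ordering $T_{\textnormal{oblivious}}^\star \geq T_{\textnormal{agnostic}}^\star$ follows from the joint convexity of the Kullback--Leibler divergence. If anything, you are more explicit than the paper on one point it glosses over, namely the verification (using $\valpha = \vbeta$) that the exponential-family instance $\nu'$ built from a given $\vlambda \in \Alt_{\vbeta}(\vmu)$ indeed lies in $\Alt(\nu)$.
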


\begin{proof}
 Using the transportation lemma from \cite{kaufmann2016complexity}
 we have for any safely calibrated oblivious policy
\[
\forall \nu' \in \Alt(\nu), \;
\sum_{a=0}^K \mathbb{E}_\vmu[N_{a}(\tau_\delta)]
\textnormal{KL}\left( \sum_{i=1}^J \alpha_i \nu_{a,i}, \sum_{i=1}^J \alpha_i \nu_{a,i}' \right)
\geq \textnormal{kl}(\delta, 1- \delta) \;.
\]
Therefore,
\begin{align*}
\textnormal{kl}(\delta, 1- \delta)
& \leq
\inf_{\nu' \in \Alt(\nu)}
\sum_{a=0}^K \mathbb{E}_\vmu[N_{a}(\tau_\delta)]
\textnormal{KL}\left( \sum_{i=1}^J \alpha_i \nu_{a,i}, \sum_{i=1}^J \alpha_i \nu_{a,i}' \right)
\\
& =  \mathbb{E}_\vmu[\tau_\delta]
\inf_{\nu' \in \Alt(\nu)}
\sum_{a=0}^K \frac{\mathbb{E}_\vmu[N_{a}(\tau_\delta)]}{\mathbb{E}_\vmu[\tau_\delta] }
\textnormal{KL}\left( \sum_{i=1}^J \alpha_i \nu_{a,i}, \sum_{i=1}^J \alpha_i \nu_{a,i}' \right)
\\
& \leq
\mathbb{E}_\vmu[\tau_\delta]
\sup_{\w \in  \Sigma_{K+1}}
\inf_{\nu' \in \Alt(\nu)}
\sum_{a=0}^K w_a
\textnormal{KL}\left( \sum_{i=1}^J \alpha_i \nu_{a,i}, \sum_{i=1}^J \alpha_i \nu_{a,i}' \right)  \;.
\end{align*}
Using
$\textnormal{kl}(\delta, 1-\delta) \sim \ln(1/\delta)$ when $\delta$
tends to 0 gives the first result.

Using the joint convexity of the KL divergence one gets
\[
\textnormal{KL}\left( \sum_{i=1}^J \alpha_i \nu_{a,i}, \sum_{i=1}^J \alpha_i \nu_{a,i}' \right)
\leq
\sum_{i=1}^J \alpha_i \textnormal{KL}(\nu_{a,i}, \nu'_{a,i}) \;.
\]

Assuming that the mean of $\nu'_{a,i} = \lambda_{a,i}$ and recalling that for distributions in $\mathcal{P}$, one has
$\textnormal{KL}(\nu_{a,i}, \nu'_{a,i}) = d(\mu_{a,i}, \lambda_{a,i})$, we deduce,
\begin{align*}
T_{\textnormal{oblivious}}^\star(\vmu)^{-1} &= \sup_{w \in \Sigma_{K+1}}
\inf_{\nu' \in \, \textnormal{\Alt}(\nu)}
\sum_{a=0}^K w_{a} \textnormal{KL}\left( \sum_{i=1}^J \alpha_i \nu_{a,i}, \sum_{i=1}^J \alpha_i \nu_{a,i}' \right)
\\
&
\leq
\sup_{w \in \Sigma_{K+1}}
\inf_{\vlambda \in \, \textnormal{\Alt}(\vmu)}
\sum_{a=0}^K \sum_{i=1}^J \alpha_i w_{a} d(\mu_{a,i}, \lambda_{a,i}) \\
& = T_{\textnormal{agnostic}}^\star(\vmu)^{-1} \;.
\end{align*}
\end{proof}

Except in the case of Bernoulli distributions --where the mixture is also a Bernoulli distribution--, finding a strategy that matches $T_{\textnormal{oblivious}}^\star(\vmu)^{-1}$ is a hard task. However, one may use the following lemma to treat the mixture in a sub-optimal way, based on the fact that it exhibits sub-gaussian behavior.

\begin{lemma}[Sub-gaussianity of mixture]
  For each $\mu \in \mathbb R$, assume that $\nu_\mu$ is a distribution on $\mathbb R$ with mean $\ex_{X \sim \nu_\mu}[X] = \mu$ that is $\sigma^2$-sub-Gaussian, meaning that $\ex_{X \sim \nu_\mu}\sbr*{e^{\lambda (X - \mu)}} \le e^{\sigma^2 \lambda^2/2}$ for any $\lambda \in \mathbb R$. Further let $\alpha(\mu)$ be a prior on $\mu$ with mean $m$ that is itself $\eta^2$-sub-Gaussian, meaning that $\ex_{\mu \sim \alpha} \sbr*{e^{\lambda(\mu -m)}} \le e^{\lambda^2 \eta^2/2}$. Then the mixture distribution $Q = \ex_{\mu \sim \alpha} \sbr*{\nu_\mu}$ is $\sigma^2+\eta^2$ sub-Gaussian.
\end{lemma}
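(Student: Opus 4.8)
The plan is to bound the moment generating function (MGF) of the mixture $Q$ directly, centered at its mean. First I would identify the correct centering point: by the tower property, $\mathbb{E}_{X \sim Q}[X] = \mathbb{E}_{\mu \sim \alpha} \mathbb{E}_{X \sim \nu_\mu}[X] = \mathbb{E}_{\mu \sim \alpha}[\mu] = m$, so the mean of $Q$ is exactly $m$ and the sub-Gaussian condition to verify is $\mathbb{E}_{X \sim Q}[e^{\lambda(X-m)}] \le e^{(\sigma^2 + \eta^2)\lambda^2/2}$ for all $\lambda \in \mathbb{R}$.

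The key step is to decompose the centered deviation as $X - m = (X - \mu) + (\mu - m)$, separating the within-component fluctuation from the fluctuation of the component mean around $m$. Using the definition of $Q$ as the $\alpha$-mixture of the $\nu_\mu$ and Fubini's theorem (justified since all integrands are nonnegative), I would write
\[
\mathbb{E}_{X \sim Q}\bigl[e^{\lambda(X-m)}\bigr]
= \mathbb{E}_{\mu \sim \alpha}\Bigl[e^{\lambda(\mu - m)}\,\mathbb{E}_{X \sim \nu_\mu}\bigl[e^{\lambda(X - \mu)}\bigr]\Bigr].
\]

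Next I would apply the two sub-Gaussian hypotheses in turn. Because the inner bound $\mathbb{E}_{X \sim \nu_\mu}[e^{\lambda(X-\mu)}] \le e^{\sigma^2 \lambda^2/2}$ holds uniformly over $\mu$, the factor $e^{\sigma^2 \lambda^2/2}$ can be pulled out of the outer expectation, leaving $e^{\sigma^2 \lambda^2/2}\,\mathbb{E}_{\mu \sim \alpha}[e^{\lambda(\mu - m)}]$. The remaining expectation is controlled by the $\eta^2$-sub-Gaussianity of the prior, giving $\mathbb{E}_{\mu \sim \alpha}[e^{\lambda(\mu - m)}] \le e^{\eta^2 \lambda^2/2}$. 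Multiplying the two bounds yields $e^{(\sigma^2 + \eta^2)\lambda^2/2}$, which is precisely the claim.

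There is no serious obstacle in this argument; the only point requiring a little care is the uniformity of the inner sub-Gaussian bound in $\mu$, which is what allows the constant $e^{\sigma^2 \lambda^2/2}$ to be factored out before integrating against $\alpha$. Once that is noted, the proof is a two-step application of the definition of sub-Gaussianity, and the additivity of the sub-Gaussian parameters $\sigma^2 + \eta^2$ emerges directly from the product of the two MGF bounds.
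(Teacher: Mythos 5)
Your proposal is correct and follows essentially the same route as the paper: conditioning on $\mu$, writing $\ex_{X \sim Q}\bigl[e^{\lambda(X-m)}\bigr] = \ex_{\mu \sim \alpha}\bigl[e^{\lambda(\mu-m)}\,\ex_{X \sim \nu_\mu}\bigl[e^{\lambda(X-\mu)}\bigr]\bigr]$, bounding the inner MGF uniformly by $e^{\sigma^2\lambda^2/2}$, and then applying the prior's sub-Gaussianity. Your explicit remarks about the centering at $m$ and the uniformity of the inner bound simply make precise what the paper leaves implicit.
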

\begin{proof}
  The mixture distribution obviously has mean $\ex_{X \sim Q}[X] = m$ and
\begin{align*}
  \ex_{X \sim Q} \sbr*{e^{\lambda (X-m)}}
  &~=~
    \ex_{\mu \sim \alpha} \sbr*{
    e^{\lambda(\mu - m)}
    \ex_{X \sim \nu_\mu} \sbr*{
    e^{\lambda (X - \mu)}
    }
    }
  \\
  &~\le~
    \ex_{\mu \sim \alpha} \sbr*{ e^{\lambda (\mu-m)} } e^{\sigma^2 \lambda^2/2}
  \\
  &~\le~
    e^{(\sigma^2 +\eta^2) \lambda^2/2} .
\end{align*}
\end{proof}
In particular, if $\alpha$ is supported on $[\pm M]$, then $\alpha$ is $M^2$ sub-Gaussian, and hence $Q$ is $(\sigma^2+M^2)$ sub-Gaussian.

\section{Optimal allocations in the Gaussian case for $K=1$ (A/B testing)}
\label{app:gaussian}

\begin{lemma}
\label{lemma:alt_simplification}
When $K=1$ with Gaussian distributions such that
$\nu_{a,i} = \mathcal{N}(\mu_{a,i}, \sigma_{a,i}^2)$ the following holds
\[
\inf_{\vlambda: \lambda_0 = \lambda_1} \sum_{i=1}^J w_{0,i} d(\mu_{0,i},\lambda_{0,i})
+ \sum_{i=1}^J w_{1,i} d(\mu_{1,i},\lambda_{1,i})
=
\frac{\Delta_1^2}{2 \sum_{i=1}^J \beta_i^2 \left( \frac{\sigma_{0,i}^2}{w_{0,i}}
+ \frac{\sigma_{1,i}^2}{w_{1,i}} \right)} \;.
\]
\end{lemma}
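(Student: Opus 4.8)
The plan is to recognize the left-hand side as the minimum of a strictly convex quadratic form subject to a single linear equality constraint, which admits a closed form. First I would substitute the explicit Gaussian divergence $d(\mu_{a,i},\lambda_{a,i}) = (\mu_{a,i}-\lambda_{a,i})^2/(2\sigma_{a,i}^2)$ and change variables to the shifts $\delta_{a,i} \df \lambda_{a,i}-\mu_{a,i}$. The objective then becomes the diagonal positive-definite quadratic form $\tfrac12\sum_{a\in\{0,1\}}\sum_i (w_{a,i}/\sigma_{a,i}^2)\,\delta_{a,i}^2$, while the constraint $\lambda_0=\lambda_1$, i.e. $\sum_i \beta_i\lambda_{0,i}=\sum_i\beta_i\lambda_{1,i}$, rewrites as the single linear equation $\sum_i\beta_i\delta_{0,i}-\sum_i\beta_i\delta_{1,i}=-\Delta_1$ (using $\Delta_1=\mu_0-\mu_1$).

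Next I would solve this program by Lagrange multipliers. Writing the Lagrangian with multiplier $\eta$ for the constraint and differentiating coordinatewise gives the stationarity conditions $\delta_{0,i}=-\eta\beta_i\sigma_{0,i}^2/w_{0,i}$ and $\delta_{1,i}=\eta\beta_i\sigma_{1,i}^2/w_{1,i}$. Substituting these back into the constraint pins down the multiplier: setting $S \df \sum_i\beta_i^2\bigl(\sigma_{0,i}^2/w_{0,i}+\sigma_{1,i}^2/w_{1,i}\bigr)$, the constraint becomes $-\eta S=-\Delta_1$, so $\eta=\Delta_1/S$.

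Finally I would evaluate the objective at this optimizer. Each term $(w_{a,i}/(2\sigma_{a,i}^2))\,\delta_{a,i}^2$ collapses to $\eta^2\beta_i^2\sigma_{a,i}^2/(2w_{a,i})$, so the total equals $\tfrac12\eta^2 S=\Delta_1^2/(2S)$, which is exactly the claimed right-hand side. Strict convexity of the diagonal quadratic form (all weights and variances being positive) guarantees that this stationary point is the unique global minimizer, so the infimum is attained and equals $\Delta_1^2/(2S)$.

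The calculation is routine, so there is no deep obstacle; the only points requiring care are translating the coupling constraint $\lambda_0=\lambda_1$ correctly into the shift variables $\delta_{a,i}$ (the $\beta_i$ enter linearly and with opposite sign for the two arms), and noting that the per-coordinate structure keeps the quadratic form diagonal, which is what produces the clean additive form of $S$. Equivalently, one could bypass the explicit Lagrangian by invoking the general identity $\min\{\tfrac12 x^\top A x : c^\top x=b\}=b^2/(2\,c^\top A^{-1}c)$ for positive-definite diagonal $A$, applied here with $b=-\Delta_1$ and $c^\top A^{-1}c=S$.
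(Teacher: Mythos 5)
Your proof is correct and follows essentially the same route as the paper: both write the Gaussian divergence as a diagonal quadratic form and solve the single-linear-constraint minimization by Lagrange multipliers, arriving at the same stationarity conditions (shifts proportional to $\beta_i\sigma_{a,i}^2/w_{a,i}$ with opposite signs for the two arms) and the same multiplier $\Delta_1/S$. Your change of variables to $\delta_{a,i}$ and solving for the multiplier via the constraint, versus the paper's maximization of the dual function $f(q)$, are only cosmetic differences in bookkeeping.
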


\begin{proof}
One has for $b \in \{0,1\}$,
\[
d(\mu_{b,i}, \lambda_{b,i}) = \frac{(\lambda_{b,i} - \mu_{b,i})^2}{2 \sigma_{b,i}^2} \;.
\]
Using the result from
Theorem~\ref{th:generic_sample_comp} for the case $K=1$, the following holds
\[
\inf_{\vlambda \in \Alt_{\vbeta}(\vmu)} \sum_{a=0}^1 \sum_{i=1}^J w_{a,i} d(\mu_{a,i}, \lambda_{a,i}) =
\min_{\vlambda \in \mathcal{L}: \lambda_0 = \lambda_1}
\sum_{a=0}^1 \sum_{i=1}^J w_{a,i} d(\mu_{a,i},\lambda_{a,i}) \;.
\]
We introduce
\begin{equation*}
L(\lambda_0, \lambda_1, q) = \sum_{i=1}^J w_{0,i} \frac{(\lambda_{0,i} - \mu_{0,i})^2}{2 \sigma_{0,i}^2}
+
\sum_{i=1}^J w_{1,i} \frac{(\lambda_{1,i} - \mu_{1,i})^2}{2 \sigma_{1,i}^2} +
q \left( \sum_{i=1}^J \beta_i(\lambda_{0,i} - \lambda_{1,i})\right) \;.
\end{equation*}
One has,
\[
\min_{\vlambda \in \mathcal{L}: \lambda_0 = \lambda_1}
\sum_{a=0}^1 \sum_{i=1}^J w_{a,i} d(\mu_{a,i},\lambda_{a,i})  =
\sup_{q \in \mathbb R} \inf_{\vlambda \in \mathcal{L}} L(\lambda_0, \lambda_1, q)
\;.
\]
Differentiating with respect to $\lambda_{0,i}$ and $\lambda_{1,i}$ brings the conditions
\[
\lambda_{0,i} = \mu_{0,i} - \frac{q \beta_i \sigma_{0,i}^2}{w_{0,i}}
\quad
\textnormal{and}
\quad
\lambda_{1,i} = \mu_{1,i} + \frac{q \beta_i \sigma_{1,i}^2}{w_{1,i}} \;.
\]
Plugging these values back in $L$ gives the function
\[
f(q) = - \frac{q^2}{2}\sum_{i=1}^J  \beta_i^2 \left( \frac{\sigma_{0,i}^2}{w_{0,i}} +
\frac{\sigma_{1,i}^2}{w_{1,i}} \right)
+ q \sum_{i=1}^J \beta_i (\mu_{0,i} - \mu_{1,i}) \;.
\]
Easy calculations show that the maximum of the function $f$ is attained for
\[
q^\star = \frac{\sum_{i=1}^J \beta_i(\mu_{0,i} - \mu_{1,i})}{\sum_{i=1}^J \beta_i^2
\left(  \frac{\sigma_{0,i}^2}{w_{0,i}} +
\frac{\sigma_{1,i}^2}{w_{1,i}}  \right)} \;.
\]
Plugging this value back in the expression of $f$,
\[
f(q^\star) =  \frac{\Delta_1^2}{ 2 \sum_{i=1}^J \beta_i^2 \left( \frac{\sigma_{0,i}^2}{w_{0,i}}
+ \frac{\sigma_{1,i}^2}{w_{1,i}} \right)}\;.
\]
\end{proof}

\lemmagaussfirst*

\begin{proof}
\item
\paragraph{Agnostic mode}
From the Lemma~\ref{lemma:agnostic}, we have
\begin{align*}
T_{\textnormal{agnostic}}^\star(\vmu)^{-1}
&=
\sup_{\w \in \mathcal{C}_{\textnormal{agnostic}}}
\inf_{\vlambda \in \textnormal{Alt}_{\vbeta}(\vmu)} \sum_{a=0}^1
\sum_{i=1}^J w_{a,i} d(\mu_{a,i}, \lambda_{a,i})
\\
&=
\sup_{\w \in \mathcal{C}_{\textnormal{agnostic}}}
\inf_{\vlambda: \lambda_0 = \lambda_1} \sum_{a=0}^1
\sum_{i=1}^J w_{a,i} d(\mu_{a,i}, \lambda_{a,i})
\quad \textnormal{(Theorem~\ref{th:generic_sample_comp})}
\\
&=
\sup_{\w \in \mathcal{C}_{\textnormal{agnostic}}}
\frac{\Delta_1^2}{2 \sum_{i=1}^J \beta_i^2 \left( \frac{\sigma_{0,i}^2}{w_{0,i}}
+ \frac{\sigma_{1,i}^2}{w_{1,i}} \right)}
\quad
\textnormal{(Lemma~\ref{lemma:alt_simplification})} \;.
\end{align*}

$\w \in \mathcal{C}_{\textnormal{agnostic}}$ implies $w_{a,i} = \alpha_i u_a$
with $(u_0,\dotsc, u_K) \in \Sigma_{K+1}$.
For this reason,
\[
\w^\star =  \argmin_{\vu: u_0 + u_1 = 1}\sum_{i=1}^J \frac{\beta_i^2}{\alpha_i}
\left(
\frac{\sigma_{0,i}^2}{u_{0}}
+ \frac{\sigma_{1,i}^2}{u_{1}}
\right) \;.
\]

We let $c_a \df \sum_{i=1}^J \frac{\beta_i^2 \sigma_{a,i}^2 }{\alpha_i}$ for $a \in \{0,1\}$.
Plugging $u_1 = 1- u_0$ in the previous expression  and differentiating with respect to
$u_0$ brings the condition
\[
u_0^2 + 2 u_0 \frac{c_0}{c_1 - c_0} - \frac{c_0}{c_1-c_0} \;.
\]
Solving this polynomial and using that $\vu \in \Sigma_2$ gives the unique solution
\[
u_0^\star= \frac{\sqrt{c_0}}{\sqrt{c_0} + \sqrt{c_1}} \;.
\]
Implying,
\[
w_{0,i}^\star = \alpha_i \frac{\sqrt{\sum_{i=1}^J \frac{\beta_i^2 \sigma_{0,i}^2 }{\alpha_i}}}{\sqrt{\sum_{i=1}^J \frac{\beta_i^2 \sigma_{0,i}^2 }{\alpha_i}} + \sqrt{\sum_{i=1}^J \frac{\beta_i^2 \sigma_{1,i}^2 }{\alpha_i}}}
\quad
\textnormal{and}
\quad
w_{1,i}^\star = \alpha_i \frac{\sqrt{\sum_{i=1}^J \frac{\beta_i^2 \sigma_{1,i}^2 }{\alpha_i}}}{\sqrt{\sum_{i=1}^J \frac{\beta_i^2 \sigma_{0,i}^2 }{\alpha_i}} + \sqrt{\sum_{i=1}^J \frac{\beta_i^2 \sigma_{1,i}^2 }{\alpha_i}}} \;.
\]

With those values,
\[
T_{\textnormal{agnostic}}^\star(\vmu)
=
\frac{2\left( \sqrt{\sum_{i=1}^J \frac{\beta_i^2 \sigma_{0,i}^2 }{\alpha_i}}
+
\sqrt{\sum_{i=1}^J \frac{\beta_i^2 \sigma_{1,i}^2 }{\alpha_i}}
\right)^2}{\Delta_1^2} \;.
\]

\item
\paragraph{Proportional mode}
Following the same line of proof, gives
\[
T_{\textnormal{prop}}^\star(\vmu)^{-1}
=
\sup_{\w \in \mathcal{C}_{\textnormal{prop}}}
\frac{\Delta_1^2}{2 \sum_{i=1}^J \beta_i^2 \left( \frac{\sigma_{0,i}^2}{w_{0,i}}
+ \frac{\sigma_{1,i}^2}{w_{1,i}} \right)}  \;.
\]
The main difference is now on the constraints on the weights. In the proportional mode,
following Lemma~\ref{lemma:proportional}, $\forall i \leq J,
\sum_{a=0}^1 w_{a,i} = \alpha_i$.
We consider the Lagrangian function:

\[
L(w_0, w_1, q_1, \dotsc, q_J) =
\sum_{i=1}^J \beta_i^2
\left( \frac{\sigma_{0,i}^2}{w_{0,i}} + \frac{\sigma_{1,i}^2}{w_{1,i}}
\right)
+ \sum_{i=1}^J q_i \left(\sum_{a \in \{0,1\}} w_{a,i} - \alpha_i \right) \;.
\]
Differentiating with respect to $w_{0,i}$ and $w_{1,i}$ gives the constraints:
\[
\frac{-\beta_i^2 \sigma_{0,i}^2}{w_{0,i}^2} + q_i = 0
\quad
\textnormal{and}
\quad
\frac{-\beta_i^2 \sigma_{1,i}^2}{w_{1,i}^2} + q_i = 0 \;.
\]
From which we can deduce
\[
\frac{w_{0,i}}{\sigma_{0,i}} =  \frac{w_{1,i}}{\sigma_{1,i}} \;.
\]
From $w_{0,i} + w_{1,i} = \alpha_i$, we deduce,
\[
q_i^\star =\frac{\beta_i^2(\sigma_{0,i} + \sigma_{1,i})^2}{\alpha_i^2} \;.
\]
Plugging this value in the first constraint gives
\[
w_{0,i}^\star = \alpha_i \frac{\sigma_{0,i}}{\sigma_{0,i} + \sigma_{1,i}}
\quad
\textnormal{and}
\quad
w_{1,i}^\star = \alpha_i \frac{\sigma_{1,i}}{\sigma_{0,i} + \sigma_{1,i}}
\;.
\]
Using those weights,
\[
T_{\textnormal{prop}}^\star(\vmu)
=
 \frac{
2 \sum_{i=1}^J \frac{\beta_i^2}{\alpha_i} (\sigma_{0,i} + \sigma_{1,i})^2}
{\Delta_1^2} \;.
\]
\item
\paragraph{Active mode}
Following the proof of Proposition~\ref{proposition_others_weights}, one has
\begin{equation}
\label{eq:nulos}
T_{\textnormal{active}}^\star(\vmu)^{-1}
=
\sup_{\w \in \Sigma_{(K+1)J}}
\frac{\Delta_1^2}{2 \sum_{i=1}^J \beta_i^2 \left( \frac{\sigma_{0,i}^2}{w_{0,i}}
+ \frac{\sigma_{1,i}^2}{w_{1,i}} \right)}  \;.
\end{equation}
Using the constraint $\w \in \Sigma_{(K+1)J}$, one gets
\begin{equation}
\label{eq:w_1_J}
w_{1,J} = 1 - \sum_{a= \{0,1\}} \sum_{i=1}^{J-1} w_{a,i} - w_{0,J} \;.
\end{equation}
We need to minimize the function
(where $w_{1,J}$ has been replaced by the
expression from Equation~\ref{eq:w_1_J})
\[
f(\w) = \sum_{a= \{0,1\}} \sum_{i=1}^{J-1} \beta_i^2\frac{\sigma_{a,j}^2}{w_{a,j}}
+ \beta_J^2 \frac{\sigma_{0,J}^2}{w_{0,J}}
+ \beta_J^2 \frac{\sigma_{1,J}^2}{1 - \sum_{a=0}^1 \sum_{i=1}^{J-1} w_{a,i} - w_{0,J}} \;.
\]
For $i \leq J-1$, taking the derivative with respect to
$w_{0,i}$ and $w_{1,i}$ gives the following constraints
\[
\beta_i^2 \sigma_{0,i}^2 \left(1 - \sum_{a=0}^1 \sum_{i=1}^{J-1} w_{a,i} - w_{0,J} \right)^2
= \beta_J^2 \sigma_{1,J}^2 w_{0,i}^2  \;,
\]
\[
\beta_j^2 \sigma_{1,i}^2 \left(1 - \sum_{a=0}^1 \sum_{i=1}^{J-1} w_{a,i} - w_{0,J} \right)^2
= \beta_J^2 \sigma_{1,J}^2 w_{1,i}^2  \;.
\]
From which we deduce
\begin{equation}
\label{eq:frac_weights}
\forall i \leq J-1, \quad \frac{w_{0,i}}{\sigma_{0,i}} = \frac{w_{1,i}}{\sigma_{1,i}} \;.
\end{equation}
Differentiating with respect to $w_{1,J}$ gives
\[
\sigma_{0,J} \left( 1 - \sum_{a=0}^1
\sum_{i=1}^{J-1} w_{a,i} - w_{0,J} \right) = \sigma_{1,J} w_{0,J} \;.
\]
Rearranging and using Equation~\ref{eq:frac_weights} gives,
\begin{equation}
\label{eq:w_0_J}
w_{0,J} = \frac{\sigma_{0,J}}{\sigma_{0,J} + \sigma_{1,J}} - \sum_{i=1}^{J-1}
\frac{w_{0,i}}{\sigma_{0,i}} \frac{\sigma_{0,i} + \sigma_{1,i}}{\sigma_{0,J} + \sigma_{1,J}}
\sigma_{0,J} \;.
\end{equation}

Using Equation~\ref{eq:frac_weights} and Equation~\ref{eq:w_0_J}, we define the function
\[
g(w_{0,1}, \dotsc, w_{0,J-1})
= \sum_{i=1}^{J-1} \beta_i^2 \frac{\sigma_{0,i}}{w_{0,i}}
(\sigma_{0,i} + \sigma_{1,i})
+ \frac{\beta_J^2 (\sigma_{0,J} + \sigma_{1,J})^2}{1- \sum_{i=1}^{J-1}
\frac{w_{0,i}(\sigma_{0,i} + \sigma_{1,i})}{\sigma_{0,i}} } \;.
\]
Differentiating with respect to $w_{0,i}$ for $i \leq J-1$ brings
\begin{equation}
\label{eq:solving_active_mode}
\forall i \leq J-1, \quad
\frac{| \beta_i| \sigma_{0,i}}{\sigma_{0,J} + \sigma_{1,J}}
\left( 1- \sum_{i=1}^{J-1} \frac{w_{0,i}}{\sigma_{0,i}} (\sigma_{0,i} + \sigma_{1,i})
\right)
=
| \beta_J | w_{0,i} \;.
\end{equation}
Multiplying both sides of this equation by $(\sigma_{0,i} + \sigma_{1,i})/\sigma_{0,i}$
and summing for $i \leq J-1$,
\[
\sum_{i=1}^{J-1} \frac{w_{0,i}}{\sigma_{0,i}} \left( \sigma_{0,i} + \sigma_{1,i} \right)
= \frac{\sum_{i=1}^{J-1} |\beta_i| (\sigma_{0,i} + \sigma_{1,i})}{\sum_{i=1}^{J}
|\beta_i| (\sigma_{0,i} + \sigma_{1,i}) } \;.
\]
Plugging this value back in Equation~\ref{eq:solving_active_mode} one has:
\[
\forall i \leq J-1, \quad
w_{0,i} = \frac{| \beta_i| \sigma_{0,i}}{\sum_{i=1}^J |\beta_i|(\sigma_{0,i} + \sigma_{1,i})}
\;.
\]
From Equation~\ref{eq:frac_weights}, we deduce,
\[
\forall i \leq J-1, \quad
w_{1,i} = \frac{| \beta_i| \sigma_{1,i}}{\sum_{i=1}^J |\beta_i|(\sigma_{0,i} + \sigma_{1,i})}
\;.
\]
We obtain the value of $w_{0,J}$ using Equation~\ref{eq:w_0_J} and that of
$w_{1,J}$ using Equation~\ref{eq:w_1_J}.
Plugging those weights in the expression given by
Equation~\ref{eq:nulos} yields the characteristic time.
\end{proof}

\section{General form of the optimal allocation in the Gaussian case}
\label{app:optimization}
\optimsimple*
\begin{proof}
From Lemma~\ref{lemma:alt_simplification}, when the distribution are Gaussian with a known variance
$\sigma^2$ one has
\[
T_{\textnormal{active}}^\star(\vmu)^{-1}
=
\sup_{\w \in \Sigma_{(K+1)J}}
\min_{b \neq 0}
\frac{\Delta_1^2}{2 \sum_{i=1}^J \beta_i^2 \left( \frac{\sigma^2}{w_{0,i}}
+ \frac{\sigma^2}{w_{b,i}} \right)} \;.
\]
Using the same continuity argument than in \cite{garivier2016optimal}, we know that the supremum
of $\w$ is attained and is indeed a maximum.
Let
\[
\Lambda_b(v, w) \df \frac{\Delta_b^2}{2 \sum_{i=1}^J \beta_i^2 \left( \frac{\sigma^2}{v_i}
+ \frac{\sigma^2}{w_i} \right)} \;.
\]
Then
\begin{align*}
\max_{\w \in \Sigma_{(K+1)J}}
\min_{b \neq 0} \;
\Lambda_b(w_0, w_b)
&=
\max_{\substack{u \in \Sigma_{K+1} \\ \forall a, \sum_{i=1}^J w_{a,i} = u_a}}
\min_{b \neq 0}
\;
\Lambda_b(w_0, w_b) \\
&=
\max_{u \in \Sigma_{K+1}}
\max_{\substack{w \in \Sigma_{(K+1)J} \\ \forall a, \sum_{i} w_{a,i} = u_a}}
\min_{b \neq 0}
\;
\Lambda_b(w_0, w_b)
\\
& \leq
\max_{u \in \Sigma_{K+1}}
\min_{b \neq 0}
\max_{\substack{w \in \Sigma_{(K+1)J} \\ \forall a, \sum_{i} w_{a,i} = u_a}}
\Lambda_b(w_0, w_b)
\quad
\textnormal{(Max-min inequality)} \;.
\end{align*}
Let $b \neq 0$,
\[
\max_{\substack{w \in \Sigma_{(K+1)J} \\ \forall a, \sum_{i} w_{a,i} = u_a}}
\Lambda_b(w_0, w_b)
= \max_{\substack{w \in \Sigma_{(K+1)J} \\ \forall a, \sum_{i} w_{a,i} = u_a}}
\frac{\Delta_b^2}{2 \sum_{i=1}^J \beta_i^2 \left( \frac{\sigma^2}{w_{0,i}}
+ \frac{\sigma^2}{w_{b,i}} \right)}  \;.
\]

Equivalently, we are interested in
\[
\min_{\substack{w \in \Sigma_{(K+1)J} \\ \forall a, \sum_{i} w_{a,i} = u_a}}
\sum_{i=1}^J \beta_i^2 \left( \frac{\sigma^2}{w_{0,i}}
+ \frac{\sigma^2}{w_{b,i}} \right) \;.
\]
We introduce the associated Lagrangian function
\[
f(w, q) = \sum_{i=1}^J \beta_i^2 \left( \frac{1}{w_{0,i}}
+ \frac{1}{w_{b,i}} \right) + \sum_{a=0}^K q_a \left(\sum_{i=1}^J w_{a,i} - u_a \right) \;.
\]
Taking the derivative with respect to $w_{0,i}$ and ${w_{b,i}}$ for the different values of $i$
yields
\[
w_{0,i} = \frac{|\beta_i|}{\sqrt{q_0}}
\quad
\textnormal{and}
\quad
w_{b,i} = \frac{|\beta_i |}{\sqrt{q_b}} \;.
\]
Summing over $i$ implies that
\[
\sqrt{q_0} = \frac{\sum_{i=1}^J | \beta_i |}{u_0}
\quad
\textnormal{and}
\quad
\sqrt{q_b} = \frac{\sum_{i=1}^J | \beta_i |}{u_b} \;,
\]
and plugging the above in the expression of the weights yields
\[
w_{0,i} = \frac{|\beta_i|}{\sum_{i=1}^J | \beta_i |} u_0
\quad
\textnormal{and}
\quad
w_{b,i} = \frac{|\beta_i|}{\sum_{i=1}^J | \beta_i |} u_b \;.
\]
In particular,
\begin{equation}
\max_{\substack{w \in \Sigma_{(K+1)J} \\ \forall a, \sum_{i} w_{a,i} = u_a}}
\frac{\Delta_b^2}{2 \sum_{i=1}^J \beta_i^2 \left( \frac{\sigma^2}{w_{0,i}}
+ \frac{\sigma^2}{w_{b,i}} \right)} =
\frac{\Delta_b^2}{2 \sigma^2 \left(\sum_{i=1}^J |\beta_i|\right)^2 \left( \frac{1}{u_{0}}
+ \frac{1}{u_{b}} \right)} \;,
\end{equation}
yielding
\[
\max_{\w \in \Sigma_{(K+1)J}}
\min_{b \neq 0} \frac{\Delta_b^2}{2 \sum_{i=1}^J \beta_i^2 \left( \frac{\sigma^2}{w_{0,i}}
+ \frac{\sigma^2}{w_{b,i}} \right)}
\leq
\frac{1}{2 \sigma^2 \left(\sum_{i=1}^J |\beta_i|\right)^2}
\max_{u \in \Sigma_{(K+1)J}}
\min_{b \neq 0}  \frac{\Delta_b^2}{\left( \frac{1}{u_{0}}
+ \frac{1}{u_{b}} \right)} \;.
\]

On the other hand, letting $w_{a,i} = \frac{|\beta_i|}{\sum_{i=1}^J | \beta_i |} u_a$ with $\sum_{a} u_a =1$
we have
\[
\frac{1}{2 \sigma^2 \left(\sum_{i=1}^J |\beta_i|\right)^2}
\max_{u \in \Sigma_{(K+1)J}}
\min_{b \neq 0}  \frac{\Delta_b^2}{\left( \frac{1}{u_{0}}
+ \frac{1}{u_{b}} \right)}
\leq
\max_{\w \in \Sigma_{(K+1)J}}
\min_{b \neq 0} \frac{\Delta_b^2}{2 \sum_{i=1}^J \beta_i^2 \left( \frac{\sigma^2}{w_{0,i}}
+ \frac{\sigma^2}{w_{b,i}} \right)} \; ,
\]
showing that the two optimization programs are equivalent and
that when denoting
\[
(u_0^\star, \dotsc, u_K^\star) =
\argmax_{u \in \Sigma_{(K+1)J}}
\min_{b \neq 0}  \frac{\Delta_b^2}{\left( \frac{1}{u_{0}}
+ \frac{1}{u_{b}} \right)} \;,
\]
one has
\[
\forall a \in \{0, \dotsc, K \}, \; \forall i \leq J, \; w_{a,i}^\star = u_a^\star \frac{|\beta_i|}{\sum_{i=1}^J | \beta_i |} \;.
\]
This corresponds to the optimal allocation strategy in the \textit{active} mode.
Recalling that when $\alpha = \beta$, the optimal weights for the \textit{active} mode satisfy both
$\mathcal{C}_{\textnormal{prop}}$ and $\mathcal{C}_{\textnormal{agnostic}}$ completes the proof.
\end{proof}

\section{Asymptotic Optimality: Proof of Theorem~\ref{thm:asym.opt} }
\label{appx:alg.works}

In this section we show that T-a-S with C-tracking and a certain threshold $\beta(t,\delta)$ is safely calibrated and asymptotically optimal. This is an important sanity check to validate our approach theoretically. Note that for the experimental validation we have explored a practically appealing variant of this algorithm: we employ an iterative scheme to approximate $\w^*(\hat \vmu(t))$, use D-tracking, and stylise the threshold.

Safe calibration follows from the definition of the recommendation rule (we report the answer $\mathcal{S}_{\vbeta}(\hat \vmu(t))$  at the empirical estimate $\hat \vmu(t)$ of the bandit instance), together with the computation of the risk assessment $\hat \delta_t$. It does not depend on the sampling rule. Our confidence level $\hat \delta_t$ is obtained by inverting the threshold $\beta(t, \delta)$ at the GLR statistic \eqref{eq:GLR}. Safe calibration then follows from an anytime-valid GLR deviation inequality with boundary $\beta(t, \delta)$. We refer to \cite[Proposition~23]{mixmart} for a boundary that is, in case of the ABC-S problem, of order $\ln \frac{1}{\delta} + K + 2 J \cdot O(\ln \ln \frac{t}{\delta})$.

It remains to argue that the T-a-S sampling rule converges to the oracle weights. The original T-a-S proof for the BAI problem is due to \cite[Theorem~14]{garivier2016optimal}. An upgrade to any single-answer problem, including our ABC-S, is due to \cite{multiple.answers}. For active mode, their theorem applies directly, while for agnostic mode it applies with the pair $(I_t, X_t)$ regarded as the observation. We get:

\begin{theorem}[{\cite[Theorems~7 and~10]{multiple.answers}}]
  For all ABC-S instances $\vmu \in \mathcal{L}$ in active mode and agnostic mode, Track-and-Stop with C-tracking and stopping threshold $\beta(t,\delta) = \ln(t^2/\delta) + O(1)$ is $\delta$-correct with asymptotically optimal sample complexity.
\end{theorem}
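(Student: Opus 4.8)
The plan is to split the statement into its two independent halves: safe calibration, which depends only on the recommendation and risk-assessment rule, and asymptotic optimality, for which the lower bound is already granted by Theorem~\ref{th:generic_sample_comp} so that only a matching upper bound on $\mathbb{E}_\vmu[\tau_\delta]$ remains to be shown.

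\textbf{Safe calibration.} I would first observe that the event $\{\hat\delta_t \le \delta\}$ is by construction identical to $\{\Lambda(t) \ge \beta(t,\delta)\}$, and that this part of the argument is entirely insensitive to how arms are sampled. On the event that the recommendation is wrong, i.e.\ $\mathcal{S}_{\vbeta}(\hat\vmu_t) \neq \mathcal{S}_{\vbeta}(\vmu)$, the true instance $\vmu$ disagrees with $\hat\vmu_t$ about the sign of $\lambda_b - \lambda_0$ for some challenger $b$, so the segment joining $\hat\vmu_t$ to $\vmu$ crosses the tying hyperplane $\{\lambda_0 = \lambda_b\}$ that is feasible in the infimum of \eqref{eq:GLR}; hence $\Lambda(t) \le \sum_{a\in\{0,b\}}\sum_i N_{a,i}(t)\, d(\hat\mu_{a,i}(t),\mu_{a,i})$. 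Crossing the threshold while being wrong therefore forces the genuine per-instance log-likelihood-ratio statistic to exceed $\beta(t,\delta)$ at some time. Taking $\beta(t,\delta)$ to be the anytime-valid boundary of \cite[Proposition~23]{mixmart}---legitimate here because the ABC-S problem has $2^K$ answers and rank $2J$, as can be read directly off \eqref{eq:seerank}---the resulting time-uniform deviation inequality caps the probability of this ever happening by $\delta$, which is exactly Equation~\ref{eq:p-value}, in every mode.

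\textbf{Asymptotic optimality.} For the upper bound I would follow the Track-and-Stop template. The first step is to record that the saddle-point objective in \eqref{eq:def_T_star_agnostic} is strictly concave in $\w$ (Appendix~\ref{appx:alg.works}), so the oracle allocation $\vmu \mapsto \w^\star(\vmu)$ is single-valued and continuous near the true instance. The second step is to verify that forced exploration makes every arm--subpopulation pair be drawn infinitely often, whence $\hat\vmu_t \to \vmu$ almost surely; composing with the continuity of $\w^\star$ and the C-tracking guarantee then yields $N_{a,i}(t)/t \to w^\star_{a,i}(\vmu)$ almost surely. Along such trajectories $\Lambda(t)/t \to T^\star(\vmu)^{-1}$, so the stopping rule $\Lambda(t) \ge \beta(t,\delta) \sim \ln(1/\delta)$ fires near $t \approx T^\star(\vmu)\ln(1/\delta)$; a standard concentration argument controlling the rare trajectories on which the empirical means stray from $\vmu$ then upgrades this almost-sure behaviour to $\limsup_{\delta\to 0}\mathbb{E}_\vmu[\tau_\delta]/\ln(1/\delta) \le T^\star(\vmu)$. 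In practice I would not redo this analysis by hand but invoke the single-answer Track-and-Stop optimality theorem of \cite{multiple.answers} after checking that ABC-S meets its regularity hypotheses: the active mode fits directly with action $(A_t,I_t)$, and the agnostic mode fits once the pair $(I_t,X_t)$ is regarded as the observation produced by action $A_t$, so that $\mathcal{C}_{\textnormal{agnostic}}$ reduces to the native simplex over arm frequencies.

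\textbf{Main obstacle.} The delicate case is the \emph{proportional} mode, which the cited single-answer theorem does not cover verbatim, because the arm may be chosen after the exogenous $I_t \sim \valpha$ is revealed, so the effective decision is a conditional allocation and the feasible set $\mathcal{C}_{\textnormal{prop}}$ carries the $J$ marginal constraints $\sum_a w_{a,i} = \alpha_i$. I would handle this by analysing the conditional tracking scheme directly---tracking $\w_t(a\mid i)$ within each revealed subpopulation---and showing that, thanks to the law of large numbers for the i.i.d.\ sequence $I_t$, the realised counts still satisfy $N_{a,i}(t)/t \to w^\star_{a,i}(\vmu) \in \mathcal{C}_{\textnormal{prop}}$; the GLR-growth and stopping-time concentration steps then transfer unchanged, with $\valpha$ entering merely as the exogenous subpopulation sampling rate. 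Establishing this convergence when $I_t$ is sampled by nature rather than chosen by the learner is the main technical point to nail down.
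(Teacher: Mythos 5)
Your proposal takes essentially the same route as the paper: safe calibration is obtained, independently of the sampling rule, from the anytime-valid GLR deviation boundary of \cite[Proposition~23]{mixmart}, and asymptotic optimality is obtained by invoking the single-answer Track-and-Stop theorem of \cite{multiple.answers} after checking that the active mode fits directly and the agnostic mode fits once the pair $(I_t, X_t)$ is regarded as the observation produced by $A_t$. The only remark is that the \emph{proportional} mode you single out as the main obstacle is not actually part of this statement (which covers only the active and agnostic modes); the paper treats it separately afterwards, exactly as you sketch, by arguing that conditional tracking combined with the law of large numbers for $I_t \sim \valpha$ makes the realised proportions converge to $\w^*_{\textnormal{prop}}(\vmu)$.
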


In \textit{proportional} mode, we have the additional constraint that the learner chooses its arm in response to seeing (but not controlling) the subpopulation $I_t$. Still, the tracking convergence result \cite[Lemma~6]{multiple.answers} goes through, upon observing that the empirical distribution of $I_t$ converges to $\valpha$ by the law of the large numbers, and hence our conditional tracking (see ``sampling rule'' in Section~\ref{sec:algs}) adds the right conditional to the right marginal. All in all, the computed joint weights converge to the joint
$\w^*_\text{prop}(\vmu)$, and tracking makes the sampling proportions also converge there.

We conclude with a remark on our use of D-tracking. Recall that D-tracking is the idea of advancing $N_a(t)$ towards $t$ times the most current oracle weights, i.e.\ $t w^*_a(\hat \vmu(t))$, while C-tracking makes $N_a(t)$ advance towards the sum of encountered oracle weights, i.e.\ $\sum_{s=1}^t w^*_a(\hat \vmu(s))$. As argued in \cite[Appendix~E]{purex.games}, D-tracking can fail to make $N_a(t)/t$ converge to $w^*_a(\vmu)$.  However, this requires that the maximiser of the lower bound problem is not unique at $\vmu$ (as we are maximising a concave function, the set of maximisers is always convex). Here we argue that such a situation does not occur for the ABC-S problem. To see why, we argue that the lower bound objective, as a function of $\w$, is strictly concave. It suffices to show this for the \textit{active} mode problem, as the problems for the other modes are further constrained maximisation problems of the same objective.

\begin{lemma}
  Fix a bandit instance $\vmu \in \mathcal{L}$. Let $\lambda \mapsto d(\mu_{a,j}, \lambda)$ be a strongly convex function for each arm $a$ and subpopulation $j$. Then for the ABC-S problem with $\vbeta$ such that $\beta_j \neq 0$ for all $j$, the oracle weights $\w^*(\vmu)$  are unique.
\end{lemma}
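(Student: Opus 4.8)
The plan is to read off the active-mode objective in its rank form \eqref{eq:seerank}: for each challenger $b \neq 0$ set
\[
g_b(\w) \df \inf_{\vlambda \in \mathcal L:\, \lambda_0 = \lambda_b}\ \sum_{a \in \{0,b\}} \sum_{i=1}^J w_{a,i}\, d(\mu_{a,i}, \lambda_{a,i}),
\]
so that the objective is $F(\w) = \min_{b \neq 0} g_b(\w)$ on $\Sigma_{(K+1)J}$. Since each $g_b$ is an infimum of functions \emph{linear} in $\w$, it is concave, hence $F$ is concave and its maximiser set $W^\star$ is convex and compact. First I would record the consequences of strong convexity of $d$: as the constraint $\lambda_0 = \lambda_b$ is affine, the inner problem has a \emph{unique} minimiser $\vlambda^\star(\w)$, and therefore for any $\w',\w''$ one has $g_b\big(\tfrac{\w'+\w''}{2}\big) \ge \tfrac12\big(g_b(\w')+g_b(\w'')\big)$ with equality \emph{iff} $\vlambda^\star(\w')=\vlambda^\star(\w'')$ (a common minimiser would have to serve for both). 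The Lagrangian stationarity conditions for $\vlambda^\star$ read $w_{0,i}\,\partial_\lambda d(\mu_{0,i},\lambda_{0,i}^\star) = -q\beta_i$ and $w_{b,i}\,\partial_\lambda d(\mu_{b,i},\lambda_{b,i}^\star) = q\beta_i$ for a single multiplier $q$.

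Next I would extract the two places where the hypotheses bite. Because $\vmu \in \mathcal L$ forces $\mu_0 \neq \mu_b$, the constraint is genuinely active and $q \neq 0$; because $\beta_i \neq 0$ for every $i$, the stationarity conditions then give $\lambda_{a,i}^\star \neq \mu_{a,i}$ and $w_{a,i} > 0$ in every coordinate of an active challenger. In particular the envelope identity $\partial_{w_{0,i}} g_b = d(\mu_{0,i},\lambda_{0,i}^\star) > 0$, i.e.\ $g_b$ is \emph{strictly increasing in the shared control weights} $w_{0,\cdot}$.

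The uniqueness argument then runs in two stages. \textbf{Stage one (all challengers active at any optimum).} A uniform allocation gives $F>0$, so $F^\star>0$; an arm with zero total weight would give $g_b=0$, so every arm carries positive weight; and if some arm $c$ were inactive at an optimum $\w^\star$ I would shift its mass into the control coordinates $w_{0,\cdot}$, which by the monotonicity above strictly raises every active $g_b$ while keeping $g_c$ above $F^\star$ for a small step — contradicting optimality. Hence $g_b(\w^\star)=F^\star$ for all $b$, at every optimum. \textbf{Stage two (rigidity on segments).} Given two maximisers $\w^{(0)}\neq\w^{(1)}$, the whole segment lies in $W^\star$, so by stage one every $g_b$ equals $F^\star$ — hence is affine — along it; by the equality case above this forces $\vlambda^\star$ to be \emph{constant} along the segment, and feeding a constant $\vlambda^\star$ back into stationarity expresses $(w_{0,\cdot},w_{b,\cdot})$ as $q$ times a fixed vector, so the endpoints are positive scalar multiples of one another in arm $b$'s coordinates. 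Since the coordinatewise-positive control block $w_{0,\cdot}$ is shared by all $b$, this common ratio $s$ is the same for every $b$; thus $\w^{(1)} = s\,\w^{(0)}$ on all coordinates, and membership in $\Sigma_{(K+1)J}$ forces $s=1$, i.e.\ $\w^{(0)}=\w^{(1)}$.

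The main obstacle to anticipate is that $F$ is \emph{not} strictly concave on the simplex: each $g_b$ sees only the $2J$ coordinates $(w_{0,\cdot},w_{b,\cdot})$ and is positively homogeneous of degree one, so it is flat along rays and along directions supported on other arms. The crux is therefore to isolate this radial flatness as the \emph{only} degeneracy of each $g_b$ — exactly where uniqueness of $\vlambda^\star$ (from strong convexity) and $\beta_j \neq 0$ (nonvanishing $q$, strictly positive control weights) enter — and then to stitch the per-challenger rigidity together through the shared control block and the normalisation. For the agnostic and proportional modes the same objective is maximised over a convex subset of $\Sigma_{(K+1)J}$, and re-running the identical two-stage argument inside that subset (the reallocation step remains feasible, e.g.\ shifting an inactive arm's mass to the control within a fixed subpopulation) yields uniqueness there as well.
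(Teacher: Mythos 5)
Your proof is correct, but it takes a genuinely different route from the paper's. The paper argues via second-order information: it Taylor-expands the inner minimisation around its optimiser $\vlambda^k$, obtains an explicit local form whose last term is claimed strictly concave in the block $(w_{0,\cdot}, w_{k,\cdot})$ thanks to $\beta_j\neq 0$ and $d''>0$, then convexifies the $\min_{k}$, performs a min-max swap, and shows every challenger's mixture weight $q_k$ is positive at the optimum by the same mass-shifting/equalisation argument you use in your stage one; uniqueness follows because the objective is dominated by a strictly concave function maximised at $\w^*(\vmu)$. Your argument is first-order and global: no expansion, but instead (i) the equality case in the concavity of $g_b$, which by strong convexity (unique inner minimiser) forces two optima to share the same $\vlambda^\star$; (ii) the stationarity identities, which then pin each block $(w_{0,\cdot},w_{b,\cdot})$ down to a positive scalar multiple of a fixed vector; and (iii) the shared control block plus simplex normalisation to force that scalar to equal $1$. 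A notable merit of your route is that it confronts head-on the $1$-homogeneity of each $g_b$ (flatness along rays), which is precisely why the paper's Taylor-expanded objective is, strictly speaking, not strictly concave on the positive orthant: the flat radial direction is only removed by the normalisation constraint, which in your proof appears explicitly as the step $s=1$, whereas the paper leaves it implicit. What the paper's route buys in exchange is quantitative local curvature information and a one-line treatment of the proportional and agnostic modes (a strictly concave function restricted to a convex subset stays strictly concave), where you instead must re-run both stages inside each constraint set, as you sketch.

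One presentational slip, not a genuine gap: you derive per-coordinate positivity $w_{a,i}>0$ at an optimum from stationarity with $q \neq 0$, and $q\neq 0$ from ``the constraint is genuinely active'' since $\mu_0 \neq \mu_b$. As written this is circular: if some $w_{a,i}=0$ with $\beta_i \neq 0$, the constraint can be met at zero cost by moving the unpenalised coordinate $\lambda_{a,i}$, so the constraint is \emph{not} effectively active and $q=0$. The clean derivation is the one you already deploy elsewhere: any zero coordinate in the block of challenger $b$ gives $g_b(\w)=0$ (this is exactly where $\beta_i\neq0$ enters), while at an optimum $g_b(\w^\star)\ge F^\star>0$; hence all coordinates of all blocks are positive, and only then do the stationarity conditions with $q\neq0$ follow. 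With that reordering your argument is complete.
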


\begin{proof}
\newcommand{\q}{\bm{q}}
Let $\w^*(\vmu)$ be any oracle weights at $\vmu$. We will show the lower bound objective \eqref{eq:seerank} is strictly concave as a function of $\w$ around $\w^*(\vmu)$, so that $\w^*(\vmu)$ was in fact unique. For each $k>0$, let $\vlambda^k$ be the minimiser in $\Alt^k(\vmu)$ of the weighted divergence in \eqref{eq:seerank}.

We perform a second-order Taylor expansion of the inner objective around $\vlambda^k$, which is a good approximation near $\vlambda^k$ (which is, after all, what matters when reasoning about $\w$ near $\w^*(\vmu)$). To this end, let us abbreviate the divergences, and their first and second derivatives in their second argument by $d_{aj}^k \df d(\mu_{a,j}, \lambda^k_{a,j})$, $g_{aj}^k \df d'(\mu_{a,j}, \lambda^k_{a,j})$ and $h_{aj}^k \df d''(\mu_{a,j}, \lambda^k_{a,j})$, which all depend on $\vlambda^k$. A second-order Taylor expansion of the inner objective of \eqref{eq:seerank} around $\vlambda^k$ yields
\[
    \inf_{\vlambda \in \Alt_k}~ \sum_{a,j} w_{a,j} d(\mu_{a,j}, \lambda_{a,j})
    ~\approx~
  \sum_{a \in \set{0,k}, j} w_{a,j} \del*{
    d_{aj}^k
    -  \frac{(g_{aj}^k)^2}{2 h_{aj}^k}
  }
  + \frac{
    \del*{
      \sum_j \beta_j \del*{
        \frac{
          g^k_{0j}
        }{
          h^k_{0j}
        }
        -
        \frac{
          g^k_{kj}
        }{
          h^k_{kj}
        }
      }
    }^2
  }{
    2 \sum_{a \in \set{0,k}, j} \frac{\beta_j^2}{w_{a,j} h_{aj}^k}
  }
\]
where the optimiser is given by
\[
  \lambda_{a,j}
  ~=~
  \lambda_{a,j}^k
  - \frac{g_{aj}^k}{h_{aj}^k}
  + \frac{\beta_j (\delta_{a=0}-\delta_{a=k})}{ w_{a,j} h_{aj}^k}
  \frac{
    \sum_j \beta_j \del*{
        \frac{
          g^k_{0j}
        }{
          h^k_{0j}
        }
        -
        \frac{
          g^k_{kj}
        }{
          h^k_{kj}
        }
      }
  }{
    \sum_{a \in \{0,k\}, j} \frac{\beta_j^2}{w_{a,j} h_{aj}^k}
  } \;.
\]
Due to the last term, each of these is a strictly concave function of $w_{a,j}$ for $a \in \set{0,k}$ and all $j \le J$ (here we use $\beta_j \neq 0$ and strong convexity $h_{aj}^k > 0$).

Now we still need to consider the $\max_{\w \in \Sigma_{(K+1) \times J}} \min_{k>0}$ problem. Let's convexify this for the inside finite min, and min-max swap to get a problem of the form $\min_{\q \in \Sigma_K} \max_{\w \in \Sigma_{(K+1) \times J}}$. Fixing the minimax outer strategy for $\q$, we find that $\w$ is the maximiser of the strictly concave function
\[
  \w ~\mapsto~ \sum_{k>0} q_k \del*{
      \sum_{a \in \set{0,k}, j} w_{a,j} \del*{
    d_{aj}^k
    -  \frac{(g_{aj}^k)^2}{2 h_{aj}^k}
  }
  + \frac{
    \del*{
      \sum_j \beta_j \del*{
        \frac{
          g^k_{0j}
        }{
          h^k_{0j}
        }
        -
        \frac{
          g^k_{kj}
        }{
          h^k_{kj}
        }
      }
    }^2
  }{
    2 \sum_{a\in \set{0,k}, j} \frac{\beta_j^2}{w_{a,j} h_{aj}^k}
  }
}
\]
To complete the argument, we argue that $q_k > 0$ for all $k > 0$, or, equivalently, that at $\w^*$ the $\min_{k>0}$ are all equalised. For if not, we can move mass from $w_{k,j}$ for the higher $k>0$ to $w_{k',j}$ for the lower $k'$ and increase the objective value. This then proves that $\w^*(\vmu)$ is unique, as the objective function is bounded above by a strictly concave function itself maximised at $\w = \w^*(\vmu)$.
\end{proof}

\section{Algorithm Details}\label{sec:algdetails}
In this section we go into more details on the algorithm for each mode. Let us start with some notation. Let $\beta(\delta,t)$ be a threshold function. We denote the inverse of $\beta(t, \delta)$ in its second argument by
\[
  \beta^{-1}(t, \Lambda)
  ~=~
  \min \setc*{\delta \in (0,1)}{
    \Lambda
    \ge \beta(t, \delta)
  }
  .
\]
We extend the definition of the GLR statistic to sample frequencies $\w$ and bandit $\vmu$ by
\[
  \Lambda(\w, \vmu)
  ~:=~
  \min_{b \neq 0}
  \inf_{
    \substack{\vlambda \in \mathcal L:
      \lambda_0 = \lambda_b}
  }
  \sum_{a \in \{0,b\}} \sum_{i=1}^J w_{a,i} d(\mu_{a,i}, \lambda_{a,i}) \;,
\]
so that the original definition \eqref{eq:GLR} is $\Lambda(t) = \Lambda(\bm{N}(t)/t, \hat \vmu(t))$. For any $\vmu$, we denote by $\nabla_\w \Lambda(\w, \vmu)$ any sub-gradient of $\w \mapsto \Lambda(\w, \vmu)$. We can obtain one such a sub-gradient by letting $(b, \vlambda)$ be any minimiser of $\Lambda(\w, \vmu)$, and constructing the vector with entry $(a,i)$ given by
\[
  (a,i) ~\mapsto~
  \begin{cases}
    d(\mu_{a,i}, \lambda_{a,i})
    & \text{if $a \in \set{0,b}$}
    \\
    0 & \text{otherwise}
  \end{cases}
 \; .
\]
Our algorithms will make use of an online learning method (called $\mathcal A$ below) for linear losses defined on the simplex. This online learning task is known as the Hedge or Experts setting in the literature. We will make use of AdaHedge \cite{ftl.jmlr}, as it adapts automatically to the range of the losses and does not require tuning. Our methods for the active, proportional and agnostic modes are displayed as Algorithms~\ref{alg:active},~\ref{alg:proportional} and~\ref{alg:agnostic}. Each algorithm consists of a Forced Exploration part, which serves to ensure that the empirical estimate of the bandit model converges, i.e.\ $\hat \vmu(t) \to \vmu$. By forcing exploration sub-linearly often, the main term in the sample complexity is unaffected asymptotically. Each algorithm further makes use of online learning to compute $\w^*(\vmu)$. In the notation of this section, we have
\[
  \w^*(\vmu)
  ~=~
  \operatorname*{argmax}_{\w \in \mathcal{C}}~
  \Lambda(\w, \vmu) \;.
\]
Our approach to learning $\w^*(\vmu)$ is to perform gradient steps on the plug-in loss function $\w \mapsto -\Lambda(\w, \hat \vmu(t))$. It is in the convex domain $\mathcal C \subseteq \Sigma_{(K+1)\times J}$ that we see the main difference between the three modes. Recall from Section~\ref{sec:mode.as.constraints} that in the active mode $\w$ is not constrained further, in the proportional mode the subpopulation marginal of $\w$ must equal $\valpha$, i.e.\ $\tuple{\bm{1}, \w} = \valpha$, and in the agnostic mode $\w$ must be the independent product $\w = \bm{v} \valpha$ of some arm marginal $\bm{v} \in \Sigma_{K+1}$ and the subpopulation frequencies $\valpha$. We hence need to design online learners for each of the three $\mathcal C$. In the active case, we have one learner $\mathcal A$ that learns the full joint $\w^*(a,j)$ directly, in the proportional case we use one learner $\mathcal A_j$ for each subpopulation $j \in [J]$ to learn the conditional distribution $\w^*(a|j)$, and in the agnostic case we again use one learner to learn the common marginal $\w^*(a)$. This difference is reflected in the loss function used in each mode, and hence in the gradient that is fed to each learner. In the active case we use the full $(K+1)\times J$ gradients
\[
  \bm{\ell}_t^\text{active}
  ~:=~
  -\nabla_\w \Lambda(\w_t, \hat \vmu(t)) \;.
\]
In the proportional case we have $\w(a,i) = \w(a|i) \alpha_i$, and by the chain rule we hence have gradients
\[
  \bm{\ell}_t^\text{i, proportional}
  ~:=~
  -\nabla_{\w(a|i)} \Lambda(\w_t, \hat \vmu(t))
  ~=~
  - \alpha_i \nabla_{\w} \Lambda(\w_t, \hat \vmu(t)) \bm{e}_i \;.
\]
Finally, in the agnostic case we have $\w(a,i) = w(a) \alpha_i$, and again by the chain rule we have
\[
  \bm{\ell}_t^\text{agnostic}
  ~:=~
  -\nabla_{\w(a)} \Lambda(\w_t, \hat \vmu(t))
  ~=~
  - \nabla_{\w} \Lambda(\w_t, \hat \vmu(t)) \valpha \;.
\]

\paragraph{Run Time}
In each of the three modes, our algorithms evaluate $\Lambda$ for the confidence in the recommendation, compute one sub-gradient of $\Lambda$ for the loss function, and spend $O(K \times J)$ time bookkeeping. Evaluation and sub-gradient computation for $\Lambda$ boil down to solving a convex minimisation problem with an equality constraint. We use Newton's method with backtracking line search to find the minimiser given $b$. Each Newton iteration takes $O(J^2)$ time (recall that only $2$ arms are involved), and we never needed more than $40$. Doing this $K$ times for the explicit minimum over $b$ yields a total per iteration run time of $O(K J^2)$.

\begin{algorithm}[p]
  \begin{algorithmic}
    \Require Online learner $\mathcal A$ for $(K+1)\times J$ experts.
    \For{$t=1, 2, \ldots$}
    \If{any pair $(a,i)$ has $N_{a,i}(t-1) \le \sqrt{t}$}
    \State Pick $A_t, I_t$ any such pair \Comment{Forced Exploration}
    \State Obtain sample $X_t$ from $\nu_{A_t,I_t}$.
    \Else
    \State Get $\w_t$ from online learner $\mathcal A$
    \State Pick $(A_t, I_t) \in \argmin_{a,i} N_{a,i}(t-1) - t w_t(a,i)$ \Comment{Direct Tracking}
    \State Obtain sample $X_t$ from $\nu_{A_t,I_t}$.
    \State Send loss vector $\bm{\ell}_t = -\nabla_{\w} \Lambda(\w_t, \hat \vmu(t))$ to $\mathcal A$
    \EndIf
    \State Recommend $\hat{\mathcal{S}}_{t} = \mathcal{S}_\vbeta(\hat \vmu(t))$ at confidence $\delta_t = \beta^{-1}(t, \Lambda(\bm{N}(t)/t, \hat \vmu(t)))$.
    \EndFor
  \end{algorithmic}
  \caption{Algorithm for Active Mode. \label{alg:active}}
\end{algorithm}

\begin{algorithm}[p]
  \begin{algorithmic}
    \Require $J$ online learners $\mathcal A^{(1)}, \ldots, \mathcal A^{(J)}$ for $(K+1)$ experts each.
    \For{$t=1, 2, \ldots$}
    \State See $I_t \sim \valpha$.
    \If{any arm $a$ has $N_{a,I_t}(t-1) \le \sqrt{\sum_a N_{a,I_t}(t-1)}$}
    \State Pick $A_t$ any such arm \Comment{Forced Exploration}
    \State Obtain sample $X_t$ from $\nu_{A_t,I_t}$.
    \Else
    \State Get $\w_t^{(j)}$ from each online learner $\mathcal A^{(j)}$
    \State Pick $A_t \in \argmin_{a} N_{a,I_t}(t-1) - t w_t^{(I_t)}(a)$ \Comment{Direct Tracking}
    \State Obtain sample $X_t$ from $\nu_{A_t,I_t}$.
    \State For $j \in [J]$, send loss vector $\bm{\ell}_t^{(j)} = - \alpha_j \nabla_{\w} \Lambda\del*{[\alpha_1 \w_t^{(1)} \cdots \alpha_J \w_t^{(J)}], \hat \vmu(t)} \bm{e}_j$ to $\mathcal A^{(j)}$
    \EndIf
    \State Recommend $\hat{\mathcal{S}}_{t} = \mathcal{S}_\vbeta(\hat \vmu(t))$ at confidence $\delta_t = \beta^{-1}(t, \Lambda(\bm{N}(t)/t, \hat \vmu(t)))$.
    \EndFor
  \end{algorithmic}
  \caption{Algorithm for Proportional Mode. \label{alg:proportional}}
\end{algorithm}

\begin{algorithm}[p]
  \begin{algorithmic}
    \Require Online learner $\mathcal A$ for $(K+1)$ experts.
    \For{$t=1, 2, \ldots$}
    \If{any arm $a$ has $\sum_{j=1}^J N_{a,j}(t-1) \le \sqrt{t}$}
    \State Pick $A_t$ any such arm \Comment{Forced Exploration}
    \State See $I_t \sim \valpha$.
    \State Obtain sample $X_t$ from $\nu_{A_t,I_t}$.
    \Else
    \State Get $\w_t$ from online learner $\mathcal A$
    \State Pick $A_t \in \argmin_{a} \sum_{j=1}^J N_{a,j}(t-1) - t w_t(a)$ \Comment{Direct Tracking}
    \State Obtain sample $X_t$ from $\nu_{A_t,I_t}$.
    \State See $I_t \sim \valpha$.
    \State Send loss vector $\bm{\ell}_t = - \nabla_{\w} \Lambda\del*{\w_t \valpha^\top, \hat \vmu(t)} \valpha$ to  $\mathcal A$.
    \EndIf
    \State Recommend $\hat{\mathcal{S}}_{t} = \mathcal{S}_\vbeta(\hat \vmu(t))$ at confidence $\delta_t = \beta^{-1}(t, \Lambda(\bm{N}(t)/t, \hat \vmu(t)))$.
    \EndFor
  \end{algorithmic}
  \caption{Algorithm for Agnostic Mode. \label{alg:agnostic}}
\end{algorithm}

\clearpage

\section{Details of A/B/n experiment}

\label{appx:booking.data.lbs}

\begin{figure}[h]
\begin{subfigure}[T]{0.49\textwidth}
\centering
\begin{tabular}{l|llll|}
& 1 & 2 & 3 &4 \\
\hline
0&0.0296 & 0.0372 & 0.0588 & 0.0620\\
1&0.0300 & 0.0373 & 0.0596 & 0.0626\\
2&0.0295 & 0.0373 & 0.0591 & 0.0630\\
\\
\end{tabular}
\caption{Estimated click probabilities for the different options and seasons}
\end{subfigure}
~
\begin{subfigure}[T]{0.49\textwidth}
\centering
\begin{tabular}{l|l|}
 &1 \\
\hline
0&0.1958\\
1&0.2950\\
2&0.2813\\
3&0.2279
\end{tabular}
\caption{frequency and importance vector ($\valpha=\vbeta$)}
\end{subfigure}

\begin{subfigure}[T]{0.49\textwidth}
\centering
\begin{tabular}{l|llll|}
& 1 & 2 & 3 &4 \\
\hline
0&0.0719 & 0.1222 & 0.1311 & 0.1238\\
1&0.0215 & 0.0475 & 0.0340 & 0.0352\\
2&0.0614 & 0.1060 & 0.1377 & 0.1078\\
\end{tabular}
\caption{$\w^*$ for \textit{active}, $T^* =  3.98 \cdot 10^6 $}
\end{subfigure}
~
\begin{subfigure}[T]{0.49\textwidth}
\centering
\begin{tabular}{l|llll|}
& 1 & 2 & 3 &4 \\
\hline
0&0.0740 & 0.1246 & 0.1476 & 0.1226\\
1&0.0108 & 0.0179 & 0.0214 & 0.0178\\
2&0.0727 & 0.1228 & 0.1460 & 0.1218\\
\end{tabular}
\caption{Sampling proportions \textit{active}}
\end{subfigure}

\begin{subfigure}[T]{0.49\textwidth}
\centering
\begin{tabular}{l|llll|}
& 1 & 2 & 3 &4 \\
\hline
0&0.0814 & 0.1269 & 0.1289 & 0.0889\\
1&0.0230 & 0.0355 & 0.0457 & 0.0277\\
2&0.0914 & 0.1326 & 0.1067 & 0.1113\\
\end{tabular}
\caption{$\w^*$ for \textit{proportional}, $T^* = 4.06 \cdot 10^6 $}
\end{subfigure}
~
\begin{subfigure}[T]{0.49\textwidth}
\centering
\begin{tabular}{l|llll|}
& 1 & 2 & 3 &4 \\
\hline
0&0.0912 & 0.1374 & 0.1307 & 0.1056\\
1&0.0148 & 0.0223 & 0.0214 & 0.0173\\
2&0.0898 & 0.1353 & 0.1293 & 0.1048\\
\end{tabular}
\caption{Sampling proportions \textit{proportional}}
\end{subfigure}

\begin{subfigure}[T]{0.49\textwidth}
\centering
\begin{tabular}{l|l|}
 &1 \\
\hline
0&0.44482\\
1&0.11111\\
2&0.44406\\
\end{tabular}
\caption{$\w^*$ for \textit{agnostic}, $T^* = 4.61 \cdot 10^6 $}
\end{subfigure}
~~
\begin{subfigure}[T]{0.49\textwidth}
\centering
\begin{tabular}{l|l|}
 &1 \\
\hline
0&0.4648\\
1&0.0766\\
2&0.4587\\
\end{tabular}
\caption{Sampling proportions \textit{agnostic}}
\end{subfigure}

\begin{subfigure}[T]{0.49\textwidth}
\centering
\begin{tabular}{l|l|}
 &1 \\
\hline
0&0.44480\\
1&0.11111\\
2&0.44409\\
\end{tabular}
\caption{$\w^*$ for \textit{oblivious}, $T^* = 4.63 \cdot 10^6 $}
\end{subfigure}
~
\begin{subfigure}[T]{0.49\textwidth}
\centering
\begin{tabular}{l|l|}
 &1 \\
\hline
0&0.4647\\
1&0.0764\\
2&0.4589\
\end{tabular}
\caption{Sampling proportions \textit{oblivious}}
\end{subfigure}

\begin{subfigure}[T]{0.49\textwidth}
\centering
\begin{tabular}{l|l|}
 &1 \\
\hline
0&0.500\\
1&0.076\\
2&0.424\\
\end{tabular}
\caption{Sampling proportions BC-ABC}
\end{subfigure}

\caption{Summary of oracle weights and sampling proportions for A/B/n experiment}
\end{figure}

\end{document}